\renewcommand{\vec}[1]{\boldsymbol{#1}}
\crefname{theorem}{Theorem}{Theorems}
\crefname{lemma}{Lemma}{Lemmas}
\crefname{proposition}{Proposition}{Propositions}
\crefname{corollary}{Corollary}{Corollaries}
\crefname{definition}{Definition}{Definitions}
\crefname{example}{Example}{Examples}
\crefname{assumption}{Assumption}{Assumptions}
\crefname{condition}{Condition}{Conditions}
    \newcommand{\rev}[1]{{\color{blue}#1}}
    \newcommand{\com}[1]{\textbf{\color{red}(COMMENT: #1)}}
    \newcommand{\rev}[1]{#1}
    \newcommand{\com}[1]{}
\title{When In Doubt, Abstain: The Impact of Abstention on Strategic Classification}
\titlerunning{The Impact of Abstention on Strategic Classification}
\author{
  Lina Alkarmi\,\orcidlink{0000-0003-3097-4781} \and
  Ziyuan Huang\,\orcidlink{0000-0002-8939-6456}  \and
  Mingyan Liu\,\orcidlink{0000-0003-3295-9200} 
}
\institute{University of Michigan, Ann Arbor, MI 48109, USA \\
\texttt{\{lalkarmi, ziyuanh, mingyan\}@umich.edu}}
\begin{document}
\maketitle
\begin{abstract}
Algorithmic decision making is increasingly prevalent, but often vulnerable to strategic manipulation by agents seeking a favorable outcome. Prior research has shown that classifier abstention (allowing a classifier to decline making a decision due to insufficient confidence) can significantly increase classifier accuracy. This paper studies abstention within a strategic classification context, exploring how its introduction impacts strategic agents' responses and how principals should optimally leverage it. We model this interaction as a Stackelberg game where a principal, acting as the classifier, first announces its decision policy, and then strategic agents, acting as followers, manipulate their features to receive a desired outcome. Here, we focus on binary classifiers where agents manipulate observable features rather than their true features, and show that optimal abstention ensures that the principal's utility (or loss) is no worse than in a non-abstention setting, even in the presence of strategic agents. We also show that beyond improving accuracy, abstention can also serve as a deterrent to manipulation, making it costlier for agents, especially those less qualified, to manipulate to achieve a positive outcome when manipulation costs are significant enough to affect agent behavior. These results highlight abstention as a valuable tool for reducing the negative effects of strategic behavior in algorithmic decision making systems.

\keywords{Strategic classification, classifier abstention, game theory, cybersecurity, machine learning}
\end{abstract}

\section{Introduction}
\label{intro}

With the proliferation of data and machine learning (ML) algorithms, algorithmic decision making is becoming more and more common, including in many areas of (cyber)security. These include training algorithms to detect security threats like malware and unauthorized access. Algorithm decisions are fast, enabling real-time response, and can be highly accurate.  At the same time, algorithms are also prone to manipulation by those who may not otherwise receive a favorable decision outcome. In the cybersecurity context, a typical example would be an adversary who designs malware with the goal of evading detection by a classifier trained to catch malicious software. Such an attempt often involves obfuscating or modifying key code signatures of malware, knowing that these features are what the algorithm has been trained to look for. Similarly, spammers are always adapting and rewriting their email content to evade the detection of mailbox filters. In a non-cybersecurity context, an example would be someone who cheats on an exam that they need to pass, or a job seeker who lies on their resume in order to pass algorithmic filtering.

The field of strategic classification has emerged to address this type of manipulation \cite{hardt_strategic_2015,perdomo_performative_2021,dong_strategic_2017,chen_learning_2020,zrnic_who_2022}. These problems are typically modeled as a Stackelberg game where a principal first designs a classifier and commits to its policy, and then strategic agents respond by manipulating their features to obtain a favorable outcome \cite{hardt_strategic_2015}. Prior work has focused on designing classifiers that incentivize honest behavior \cite{kleinberg_how_2019,jin_incentive_2022,bechavod_gaming_2021} or on using randomized rules to create an optimal stochastic policy \cite{10485755}.

In a parallel development, there has been an increasing interest in the idea of {\em abstention} as an additional option for a classifier to decline to provide a classification decision when it lacks sufficient confidence.  The ability to abstain from making a decision has been shown to significantly increase the accuracy of a classifier, even if it comes at the expense of leaving some tasks undecided (which in practice may then need to go through manual inspection, etc.) \cite{el-yaniv_foundations_2010,ortner_learning_2016,geifman_selective_2017}. 

In this paper we are interested in understanding how the introduction of abstention impacts the response of a strategic agent, and in turn, how the principal should optimally determine when to abstain in anticipation of the agent's best response. To the best of our knowledge, this is the first study that considers abstention in a strategic classification context. We will limit ourselves to  binary classifiers, as is the case with the vast majority of the literature on strategic classification. We will also limit our attention to the case where the agent cannot change its true features (or its true label); it can only change what is observed by the classifier. In other words, the agent can change the way it looks at a cost but not its substance. This means that under this model an agent is not allowed to make an honest effort to improve its true feature or label, a type of model studied in \cite{braverman_role_2020,hardt_strategic_2015,miller_strategic_2020}. To provide a foundational and analytically tractable analysis of this problem, we will use a one-dimensional case study with a simple threshold-based classifier. We leave the extension to multi-dimensional settings, such as with linear classifiers and Gaussian feature distributions, for future work.

We will derive the solution to a case of this new Stackelberg game and show the optimal decisions by both the decision maker and the agent.  Our main findings are as follows: 
\begin{enumerate}
\item The ability to abstain, provided it is done optimally, always allows the principal to attain a utility that is no worse than without this ability, regardless of the cost of abstention.  This is true in the absence of a strategic agent (as shown in the literature), and continues to hold in the presence of one (this paper). 

\item While abstention is typically used to improve the principal's accuracy (and thus utility), we show that in the presence of a strategic agent, when manipulation costs are sufficiently but not prohibitively high, abstention can also serve as a deterrent to strategic manipulation, by effectively making it harder/costlier for the agent to manipulate. 

\end{enumerate} 

The remainder of the paper is organized as follows. Section \ref{sec:problem} defines the problem and Stackelberg game formulation. In Section \ref{sec:fixed_f_analysis}, we characterize the optimal abstention function for a fixed classifier. Section \ref{sec: uniform_example} presents a case study using a linear classifier and uniform agent feature distribution, where we analyze the principal's optimal abstention, equilibrium without abstention, and expected manipulation of unqualified agents. Section \ref{sec:sim} provides simulation results, including the impact of system parameters, and an assessment of abstention's ability to reduce harm from strategic agents. Finally, Section \ref{sec:conclusion} concludes the paper.

\section{Problem Formulation}
\label{sec:problem}
 We consider a strategic classification setting where an agent possesses a true feature vector $\vec{x}\in\mathcal{X}$ and a true label $y\in\{0,1\}$, jointly distributed according to the distribution $\mathcal{D}$. \rev{Let $\mathcal{X} \subseteq \mathbb{R}^d$ denote the feature space, representing the set of all possible true and observable feature vectors for an agent.} The principal interacts with the agent through a {\em Stackelberg game}: the principal first commits to a classifier $f:\mathcal{X}\to\{0,1\}$ that produces predictions and a rejection/abstention function $r:\mathcal{X}\to\{0,1\}$, after which the agent strategically manipulates its feature vector to $\hat{\vec{x}}\in\mathcal{X}$ to maximize its {\em utility}:
\begin{align}\label{eq:agent-utility}
    U({\vec{z}}|\vec{x}):=f({\vec{z}})r({\vec{z}})-\gamma\cdot\textrm{dist}({\vec{z}},\vec{x}).
\end{align}
Here, $\textrm{dist}(\cdot,\cdot)$ is some distance measure over $\mathcal{X}$ and $\gamma>0$ scales the distance into manipulation cost. 
The input or the observable feature to the classifier and the abstention function is the manipulated feature $\hat{\vec{x}}$. 
The agent only benefits from accepted positive decisions of the principal (the first term), determined by the two functions $f$ and $r$: 
the classifier generates a prediction (estimated label) $\hat{y}$ via $\hat{y}:=f(\hat{\vec{x}})$, which is then accepted (resp. abstained) by the principal if $r(\hat{\vec{x}})=1$ (resp. $\hat{\vec{x}}=0$).
The principal's objective is to minimize the {\em expected loss} in anticipation of the agent's manipulation:
\begin{equation}\label{eq:principal_loss}
    \begin{aligned}
\min_{f,r}\;&L(f,r):=\mathbb{E}_{\vec{x},y\sim\mathcal{D}}[l(f(\hat{\vec{x}}),y)r(\hat{\vec{x}})+c(1-r(\hat{\vec{x}}))] \\
\textrm{s.t.}&\;\hat{\vec{x}}\in\arg\max_{\vec{z}\in\mathcal{X}}\;U(\vec{z}|\vec{x})~
\end{aligned}
\end{equation}
where $l:\{0,1\}^2\to[0,1]$ is the {\em pointwise loss function} where $l(f(\vec{x}),y)$ represents the loss incurred for a single prediction $f(\vec{x})$ given the true label $y$. We assume $l(0,0)=l(1,1)=0$ without loss of generality. The principal faces an abstention cost $c \in [0,1]$ when refraining from making a decision. This reflects the extra resources that may be needed to process the rejections or opportunity cost due to reduced classification coverage. For example, if an intrusion detection system is uncertain about a potential threat, the firm might incur additional manual investigation costs to resolve the ambiguity. \rev{In the rest of the paper, we also refer to Eq. \eqref{eq:principal_loss} as the {\em constrained} problem and the problem without the strategic agent as the {\em unconstrained} problem. We will also use the terms {\em principal} and {\em decision maker} interchangeably.}

\rev{In words, the principal's goal is to minimize a combined cost of making a classification mistake (with a unit cost of 1) and of declining to make a classification decision measured by the pointwise loss $l$. On the other hand, the agent decides to maximize its reward from a positive decision (unit reward of 1) less a quadratic cost of manipulation. This work focuses on deriving the optimal abstention function $r$ for a fixed (and potentially suboptimal) classifier $f$.}

\section{Optimal Abstention for a Fixed Classifier}
\label{sec:fixed_f_analysis}

In this section, we focus on characterizing the optimal abstention function given a fixed classifier, in the presence of a strategic agent.
In doing so, we will analyze the principal's expected loss and examine the difference between using a classifier with an abstention mechanism or strategic manipulation and one without.

\subsection{Agent's Best Response}
\label{ssec:agent_best_response_fixed_f}

According to the agent's utility in Eq. \eqref{eq:agent-utility}, its best response $\hat{\vec{x}}$ is a function of the game parameters, also written as $\hat{\vec{x}}(\vec{x}, \gamma, f,r)$. 
The best response is the result of balancing the desire to achieve a positive, accepted classification against the cost of manipulation.
Given the agent's true feature vector $\vec{x}$ and manipulation cost $\gamma$, its best response falls into two scenarios:
\begin{enumerate}
\item If there exists a manipulated feature $\hat{\vec{x}}$ such that $f(\hat{\vec{x}})=r(\hat{\vec{x}}) = 1$ and $\gamma\cdot\textrm{dist}(\hat{\vec{x}},\vec{x})\leq 1$, then there is incentive for the agent to manipulate. Its best response $\hat{\vec{x}}^{*}$ is the cheapest of these features to manipulate.

\item If no such $\hat{\vec{x}}$ exists (that achieves a positive classification and with manipulation cost $\leq 1$), then the agent has no incentive to manipulate and its best response is simply its true feature vector $\vec{x}$.
\end{enumerate}

The exact expression of the agent's best response depends on the classifier $f$ and the abstention function $r$. In Section \ref{sec: uniform_example} we will consider a specific example where the full expression of the agent's best response is obtained.

\subsection{Principal's Expected Loss and the Advantage of Abstention}
\label{ssec:principal_loss_fixed_f}

Given the agent's best response $\hat{\vec{x}}(\vec{x}, \gamma, f,r)$, the principal's expected loss $L(f,r)$ can be written as follows:
\begin{eqnarray}
L(f,r) = \mathbb{E}_{\vec{x},y\sim\mathcal{D}}\left[l(\hat{\vec{x}}(\vec{x}, \gamma, f,r),y)r(\hat{\vec{x}}(\vec{x}, \gamma, f,r))+c(1-r(\hat{\vec{x}}(\vec{x}, \gamma, f,r)))\right].
\end{eqnarray}

With a fixed classifier $f$, minimizing this function with respect to $r$ yields the optimal abstention function $\bar{r}^*$. The specific calculation of this expectation requires knowledge of the distribution $\mathcal{D}$ and the fixed form of $f$. Below is a general result that says it is always in the principal's interest to have the option to abstain.

\begin{theorem}\label{thm:benefit_abstention}
For any fixed classifier $f$, any data distribution $\mathcal{D}$, and any principal's cost of abstention $c$, the minimum expected loss achievable with an optimal abstention function $\bar{r}^*$ is less than or equal to the expected loss without abstention. That is:
$$ L(f,\bar{r}^*) \le L_{\textrm{no\_abstention}}(f) $$
where $L_{\textrm{no\_abstention}}(f)$ is the expected loss under $f$ with no abstention applied.
\end{theorem}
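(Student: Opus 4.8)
The plan is to observe that the ``no-abstention'' setting is itself a special case of the abstention framework --- the one obtained by taking the constant abstention function that always accepts --- so that $L_{\textrm{no\_abstention}}(f)$ is merely one admissible value of the map $r\mapsto L(f,r)$, and the optimal $\bar r^*$ can only do at least as well. No new structural fact about $f$, $\mathcal{D}$, or $c$ is needed; the whole content is that the feasible set of abstention functions contains this constant function.

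Concretely I would proceed in three short steps. \textbf{Step 1 (the no-abstention loss is a feasible point).} Let $r_1:\mathcal{X}\to\{0,1\}$ be the constant function with $r_1(\vec z)=1$ for all $\vec z\in\mathcal{X}$. When $r=r_1$, the abstention-cost term $c(1-r(\hat{\vec x}))$ vanishes identically and the first term reduces to $l(f(\hat{\vec x}),y)$; moreover the agent's utility in Eq.~\eqref{eq:agent-utility} becomes $f(\vec z)-\gamma\cdot\textrm{dist}(\vec z,\vec x)$, which is exactly the agent's objective in the non-abstaining strategic-classification game. Hence the induced best response $\hat{\vec x}(\vec x,\gamma,f,r_1)$ coincides with the best response used in defining $L_{\textrm{no\_abstention}}(f)$, and therefore $L(f,r_1)=L_{\textrm{no\_abstention}}(f)$. \textbf{Step 2 (optimality of $\bar r^*$).} By definition $\bar r^*$ minimizes $r\mapsto L(f,r)$ over all abstention functions; since $r_1$ is one such function, $L(f,\bar r^*)=\min_r L(f,r)\le L(f,r_1)$. (If the minimum is not attained one replaces $\bar r^*$ by a near-minimizer or argues with the infimum; in the one-dimensional threshold setting of Section~\ref{sec: uniform_example} a minimizer does exist.) \textbf{Step 3 (chain the bounds).} Combining Steps 1 and 2, $L(f,\bar r^*)\le L(f,r_1)=L_{\textrm{no\_abstention}}(f)$, which is the claim.

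I do not expect a genuine technical obstacle here; rather, the one point that needs care --- and that a skeptical reader would raise --- is that introducing abstention changes the agent's best response, so one cannot argue \emph{pointwise} (``abstain on $(\vec x,y)$ whenever $c<l(f(\hat{\vec x}),y)$'') without circularity, since $\hat{\vec x}$ itself depends on $r$. The argument above deliberately sidesteps this: it never analyzes what the optimal $r$ does, only that \emph{some} admissible $r$ (namely $r_1$) exactly reproduces the no-abstention loss, after which optimality of $\bar r^*$ closes the gap. A secondary subtlety is tie-breaking in the agent's $\arg\max$: as long as the same tie-breaking convention is used to define $L(f,r)$ for every $r$ (including $r_1$), the comparison remains valid. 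Finally, I would note that the inequality can be strict --- the case study later in the paper exhibits such instances --- so the theorem is not vacuous.
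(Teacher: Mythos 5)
Your proposal is correct and follows essentially the same argument as the paper: instantiate the constant abstention function $r\equiv 1$, observe that the agent's utility and hence best response coincide with the no-abstention game so that $L(f,r\equiv 1)=L_{\textrm{no\_abstention}}(f)$, and then invoke optimality of $\bar r^*$ over the set of abstention functions. Your added remarks on tie-breaking and on attainment of the minimum are reasonable refinements but do not change the substance.
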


\begin{proof}

Let $\mathcal{R}$ be the set of all valid abstention functions $r:\mathcal{X}\to\{0,1\}$. The principal's problem is to choose an optimal abstention function $\bar{r}^* \in \mathcal{R}$ that minimizes their expected loss $L(f,r)$, given the agent's best response to $(f,r)$. Consider the case where the principal sets $r(\vec{\hat{x}}) = 1$ for all $\hat{\vec{x}}\in\mathcal{X}$, meaning the principal never abstains. This choice of $r(\hat{\vec{x}})$ effectively reduces the principal's model to that of no-abstention. If the principal adopts this $r(\hat{\vec{x}})$, the agent's utility function simplifies to $U(\vec{\hat{x}}) = f(\hat{\vec{x}}) - \gamma\cdot\textrm{dist}(\hat{\vec{x}},\vec{x})$, which is the same as the agent's utility function in the no-abstention setting. Thus, $\hat{\vec{x}}(\vec{x}, \gamma, f, r = 1) = \hat{\vec{x}}_{\textrm{no\_abstention}}(\vec{x},\gamma,f)$. Therefore, $L(f,r = 1) = L_{\textrm{no\_abstention}}(f)$. Since $\bar{r}^*$ is the optimal abstention function, it minimizes $L(f,r)$ over the entire set of valid abstention functions $\mathcal{R}$. Therefore, by definition of optimality, $L(f,\bar{r}^*) \le L(f,r = 1)$. Substituting this, we conclude that $L(f,r^*) \le L_{\textrm{no\_abstention}}(f)$. \qed

\end{proof}

\subsection{Comparison of Optimal Abstention: Strategic vs. Non-Strategic}

This section characterizes the optimal abstention function given a fixed classifier in a strategic (constrained) setting, in comparison to the optimal abstention function when the agent is non-strategic (unconstrained).
We refer to the solution to the principal's problem as the {\em constrained} (or strategic) solution, denoted as $\bar{r}^*$, and the solution where the agent does not manipulate the {\em unconstrained} (or non-strategic) solution, denoted as $r^*$. Define $L_f(\vec{x}):=\mathbb{E}_{y}[l(f(\vec{x}),y)\big\vert\vec{x}]$ as the classifier's {\em conditional loss} on $\vec{x}$. It's a standard result from the literature that the following function
\begin{equation}\label{eq:general-r-only}
r^*(\vec{x}) = \begin{cases}
1 & L_f(\vec{x})\leq c\\
0 & L_f(\vec{x})> c
\end{cases}
\end{equation}
is a solution to the unconstrained problem. The solution is unique up to the case $L_f(\vec{x})= c$ where the principal is indifferent between abstention and not. Intuitively, the principal chooses to abstain when the expected loss associated with the classification decision exceeds the fixed cost of abstention. The next result highlights that the core difference between the constrained and unconstrained optimal abstention functions lies essentially in the positively classified data points. 
\begin{theorem}\label{theorem:negative-coincides}
When $f$ is given and $\bar{r}^*$ is an optimal abstention function to the constrained problem, then the abstention function $\tilde{r}^*$ such that $\forall\hat{\vec{x}}\in\mathcal{X}$,
\begin{align}
&f(\hat{\vec{x}})=1\implies \tilde{r}^*(\hat{\vec{x}})=\bar{r}^*(\hat{\vec{x}})\label{thm:eq:positive} \\
&f(\hat{\vec{x}})=0\implies \tilde{r}^*(\vec{\vec{x}})=r^*(\hat{\vec{x}})\label{thm:eq:negative}
\end{align}
is also an optimal abstention function for the constrained problem.
\end{theorem}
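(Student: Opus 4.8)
The plan is to exploit a structural decoupling: the $\{\hat{\vec{x}}:f(\hat{\vec{x}})=0\}$ region is completely insulated from the strategic interaction, so on that region the constrained problem reduces to the unconstrained one, whose optimizer is $r^*$. The first step is to note that the agent's utility in Eq.~\eqref{eq:agent-utility} depends on $r$ only through its restriction to $\{\vec{z}:f(\vec{z})=1\}$, since $f(\vec{z})r(\vec{z})=0$ whenever $f(\vec{z})=0$ regardless of $r(\vec{z})$. Because $\tilde{r}^*$ and $\bar{r}^*$ agree on $\{f=1\}$ by Eq.~\eqref{thm:eq:positive}, the agent faces identical utility functions under the two rules, and hence they induce the same best-response map, which I will denote $\hat{\vec{x}}(\vec{x})$ (using the same selection rule when the $\arg\max$ is not a singleton).

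The second step is the key structural fact: under \emph{any} abstention function, if an agent's best response satisfies $f(\hat{\vec{x}}(\vec{x}))=0$, then in fact $\hat{\vec{x}}(\vec{x})=\vec{x}$ and $f(\vec{x})=0$. Indeed, any $\vec{z}$ with $f(\vec{z})r(\vec{z})=0$ yields $U(\vec{z}\mid\vec{x})=-\gamma\cdot\textrm{dist}(\vec{z},\vec{x})\le 0=U(\vec{x}\mid\vec{x})$, with equality only if $\vec{z}=\vec{x}$, so such a $\vec{z}$ is a best response only when it is the true feature itself. Consequently, writing $\mathcal{X}_0:=\{\vec{x}:f(\hat{\vec{x}}(\vec{x}))=0\}$ and $\mathcal{X}_1$ for its complement (common to both rules by the first step), we have $\hat{\vec{x}}(\vec{x})=\vec{x}$ for every $\vec{x}\in\mathcal{X}_0$.

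The third step splits the principal's expected loss along this partition. On $\mathcal{X}_1$ the best response lands in $\{f=1\}$, where $\tilde{r}^*$ and $\bar{r}^*$ coincide, so the two rules contribute the same amount. On $\mathcal{X}_0$ we have $\hat{\vec{x}}(\vec{x})=\vec{x}$ with $f(\vec{x})=0$, so the contribution of a point $\vec{x}$ is $L_f(\vec{x})r(\vec{x})+c(1-r(\vec{x}))$, which over $r(\vec{x})\in\{0,1\}$ is minimized precisely by the threshold rule $r^*$ of Eq.~\eqref{eq:general-r-only}, attaining $\min\{L_f(\vec{x}),c\}$. Since $\tilde{r}^*$ equals $r^*$ on all of $\{f=0\}\supseteq\mathcal{X}_0$, its contribution on $\mathcal{X}_0$ is no larger than that of $\bar{r}^*$. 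Hence $L(f,\tilde{r}^*)\le L(f,\bar{r}^*)$; since $\bar{r}^*$ is optimal the reverse inequality also holds, so $L(f,\tilde{r}^*)=L(f,\bar{r}^*)$ and $\tilde{r}^*$ is optimal.

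I expect the main obstacle to be the structural fact in the second step: pinning down exactly which points are reachable as best responses in the $f=0$ region, and handling potential ties in the agent's $\arg\max$ so that the best-response map is genuinely shared by the two abstention rules (the values of $\tilde{r}^*$ on points of $\{f=0\}$ that are never reached are irrelevant and can be set arbitrarily). Everything after that is a routine pointwise comparison of losses.
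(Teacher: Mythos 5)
Your proof is correct, and it reaches the paper's conclusion by a noticeably leaner route. The paper's argument changes variables to the post-response density $q(\hat{\vec{x}},y)$, splits the loss by where the \emph{reported} feature lands ($f(\hat{\vec{x}})=0$ versus $f(\hat{\vec{x}})=1$), and then analyzes the pre-image sets $N_{f,\bar{r}^*}(\hat{\vec{x}})$ to show (i) the positively classified term is invariant to abstention values on $\{f=0\}$ and (ii) on $\{f=0\}$ the post-response density is either zero or equals the pre-response density, so $r^*$ is pointwise optimal there. You obtain the same two facts more directly: since $U(\vec{z}\mid\vec{x})=f(\vec{z})r(\vec{z})-\gamma\,\mathrm{dist}(\vec{z},\vec{x})$ depends on $r$ only through its restriction to $\{f=1\}$, the best-response map (with a fixed tie-breaking rule) is identical under $\bar{r}^*$ and $\tilde{r}^*$, and no agent ever manipulates \emph{into} $\{f=0\}$, so partitioning the \emph{true}-feature space into manipulating and truthful agents lets you compare losses pointwise without ever introducing $q$ or the pre-image sets. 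The paper's machinery buys an explicit description of the induced post-response distribution (reused in the proof of Theorem \ref{theorem:no-larger-accept}), while your version isolates exactly the two facts needed for this theorem and makes the tie-breaking caveat explicit; both establish $L(f,\tilde{r}^*)\le L(f,\bar{r}^*)$ and conclude by optimality of $\bar{r}^*$.
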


What this result says is that we can always find an equally optimal constrained abstention function that coincides with $r^*$ on $\{f(\hat{\vec{x}})=0\}$. Intuitively, agents would never manipulate to $\hat{\vec{x}}$ if $f(\hat{\vec{x}})=0$, regardless of the choice of abstention. Consequently, the post-response density at $\hat{\vec{x}}$ is either zero (if the agent with true feature $\hat{\vec{x}}$ has an incentive to manipulate) or the same as the pre-response density at $\hat{\vec{x}}$ (otherwise). If the agent manipulates, the value of $\bar{r}^*(\hat{\vec{x}})$ can be arbitrarily chosen because its zero post-response density means it contributes nothing to the principal's expected loss; conversely, if the agent does not manipulate, $r^*(\hat{\vec{x}})$ is already the optimal abstention decision for $\hat{\vec{x}}$ under the pre-response distribution. Therefore, reusing $r^*$ on negatively classified data points for the constrained solution remains optimal.

In light of this, we assume in the following that $\bar{r}^*$ are chosen to satisfy Eq. \eqref{thm:eq:negative}.
For $f(\hat{\vec{x}})=0$, we have $L_f(\hat{\vec{x}})=l(0,1)p(y=1|\hat{\vec{x}})$. Thus, the principal tends to abstain from more negative data points when it becomes more false-negative averse (i.e., when $l(0,1)$ is higher). This holds for both constrained and unconstrained solutions.

\begin{theorem}\label{theorem:no-larger-accept}
If $\bar{r}^*$ is an optimal constrained abstention function and satisfies Theorem \ref{theorem:negative-coincides}, and $f$ is ``informative'' (i.e., $p(y=1|\vec{x})\geq p(y=1|\vec{z})\iff f(\vec{x})\geq f(\vec{z})$), then we must have $\bar{r}^*\not> r^*$.
\end{theorem}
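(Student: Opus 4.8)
The plan is to first turn the ``informative'' hypothesis into rigid structure and then argue by contradiction. I would begin by noting that informativeness forces $p(y=1\mid\cdot)$ to be constant on each level set of $f$: applying the stated equivalence in both directions to a pair $\vec{x},\vec{z}$ with $f(\vec{x})=f(\vec{z})$ gives $p(y=1\mid\vec{x})=p(y=1\mid\vec{z})$, while $f(\vec{x})=1>0=f(\vec{z})$ gives $p(y=1\mid\vec{x})>p(y=1\mid\vec{z})$. Hence there are constants $p_1>p_0$ with $p(y=1\mid\cdot)\equiv p_1$ on $\{f=1\}$ and $\equiv p_0$ on $\{f=0\}$; since $l(0,0)=l(1,1)=0$, the conditional loss $L_f$ is then constant on each level set ($L_f\equiv l(1,0)(1-p_1)$ on $\{f=1\}$ and $L_f\equiv l(0,1)p_0$ on $\{f=0\}$), and therefore the unconstrained optimum $r^*$ from Eq.~\eqref{eq:general-r-only} is constant on $\{f=1\}$ and on $\{f=0\}$.

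Next, I would suppose for contradiction that $\bar{r}^*>r^*$. Because $\bar{r}^*$ agrees with $r^*$ on $\{f=0\}$ (\cref{theorem:negative-coincides}) and ``$>$'' includes ``$\ge$'' pointwise, the strict inequality must occur on $\{f=1\}$, where $r^*$ is constant. If $r^*\equiv 1$ on $\{f=1\}$ then $\bar{r}^*\ge r^*$ forces $\bar{r}^*\equiv 1$ there too, so $\bar{r}^*=r^*$, contradicting $\bar{r}^*\neq r^*$. Thus the only case to rule out is $r^*\equiv 0$ on $\{f=1\}$, i.e.\ $l(1,0)(1-p_1)>c$, so that $\min(L_f(\vec{x}),c)=c$ on $\{f=1\}$. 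Under $r^*$ the accepted-positive region $\{f=1\}\cap\{r^*=1\}$ is empty, so no agent has any incentive to manipulate, and hence $L(f,r^*)=\mathbb{E}_{\vec{x}\sim\mathcal{D}}[\min(L_f(\vec{x}),c)]$.

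I would then lower-bound $L(f,\bar{r}^*)$ by showing that, whatever the agent's best response is, every true feature $\vec{x}$ contributes at least $\min(L_f(\vec{x}),c)$. If the agent stays put, its contribution $L_f(\vec{x})\bar{r}^*(\vec{x})+c(1-\bar{r}^*(\vec{x}))$ is one of $L_f(\vec{x}),c$, hence $\ge\min(L_f(\vec{x}),c)$. If it manipulates, it must land on a point with $f=1$ (only such points yield an accepted positive), so its contribution is $\mathbb{E}_y[l(1,y)\mid\vec{x}]=l(1,0)(1-p(y=1\mid\vec{x}))$, which equals $l(1,0)(1-p_1)$ or $l(1,0)(1-p_0)$; since $p_0<p_1$, both exceed $l(1,0)(1-p_1)>c\ge\min(L_f(\vec{x}),c)$. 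Integrating gives $L(f,\bar{r}^*)\ge L(f,r^*)$, so optimality of $\bar{r}^*$ forces equality. However, on the nonempty set $\{f=1\}\cap\{\bar{r}^*=1\}$ — which is exactly where $\bar{r}^*>r^*$ — the agent is already accepted-positive, stays put, and contributes $L_f=l(1,0)(1-p_1)>c$, strictly more than the $c$ it contributes under $r^*$; assuming this set carries positive $\mathcal{D}$-probability (as under the full-support distributions used later in the paper) this yields $L(f,\bar{r}^*)>L(f,r^*)$, contradicting optimality. Hence $\bar{r}^*\not>r^*$.

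The hard part is that passing from $\bar{r}^*$ to $r^*$ also changes the agent's best response — manipulators may relocate to a different accepted-positive point or stop manipulating altogether — so the two losses cannot simply be compared point-by-point. The device that handles this is the lower bound in the previous paragraph, which is robust to wherever a relocating agent ends up; this is precisely where informativeness is essential, since it guarantees that a relocated agent lands in $\{f=1\}$ and that the loss it then induces, $l(1,0)(1-p(y=1\mid\vec{x}))$, is at least $l(1,0)(1-p_1)>c$. The only remaining delicacy is the mild, setting-consistent assumption that a nonempty acceptance region has positive probability (absent which ``$\bar{r}^*>r^*$'' must be read in the almost-everywhere sense).
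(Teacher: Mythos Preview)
Your proof is correct and takes a genuinely different, more elementary route than the paper. The key difference is that you exploit the full force of the stated ``$\iff$'' in the informativeness hypothesis to conclude that $p(y=1\mid\cdot)$ is \emph{constant} on each level set of the binary classifier $f$, which forces $r^*$ itself to be constant on $\{f=1\}$; from there the only live case is $r^*\equiv 0$ on $\{f=1\}$, and a clean pointwise lower bound (every true $\vec{x}$ contributes at least $\min(L_f(\vec{x}),c)$ under any abstention rule) finishes the job. The paper, by contrast, never uses constancy: it partitions $\mathcal{X}$ according to which agents manipulate under $r^*$ versus $\bar{r}^*$ (sets $A$, $M(r^*)\setminus A$, $M(\bar{r}^*)\setminus A$, $B$) and compares term by term, invoking informativeness only in the weaker monotone form $f(\vec{x})>f(\vec{z})\Rightarrow p(y=0\mid\vec{x})>p(y=0\mid\vec{z})$. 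Your argument is shorter and cleaner for the theorem exactly as stated; the paper's argument is more robust in that it would survive weakening ``informative'' to mere monotonicity --- which is probably what the authors intended, since their noisy simulation setting ($p(y=1\mid x)=\Phi(x/\sigma)$) violates the iff but satisfies monotonicity. Both proofs share the same small endgame issue you already flagged: getting a \emph{strict} contradiction requires the set $\{f=1,\bar{r}^*=1,r^*=0\}$ to carry positive $\mathcal{D}$-mass, and the paper's proof glosses over this just as yours explicitly notes it.
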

In other words, the principal should not abstain less in optimality when anticipating potential manipulation from the agents.

\section{Case Study: One-Dimensional Uniform Distribution}
\label{sec: uniform_example}

We next examine closely a specific case in one dimension (scalar features) to gain further insights into the impact of abstention on the agent's best response. We assume $l(0,1)=l(1,0)=1$ where both false negatives and false positives are equally penalized and $\textrm{dist}(\vec{x},\vec{x}')=\lVert\vec{x}-\vec{x}'\rVert^2$, $\forall\vec{x},\vec{x}'\in\mathcal{X}$. As a common practice, we assume a thresholding classifier $f(\vec{x})=\mathbf{1}_{h(\vec{x})\geq0}$ where $h:\mathcal{X}\to\mathbb{R}$ is a continuous scoring function. We will also limit our attention to a type of {\em threshold abstention function}, given as follows.

\begin{definition}\label{as:abstention_threshold}
The principal's abstention decision depends only on the magnitude of the scoring function's output, $h(\hat{\vec{x}})$. A non-negative threshold $T \ge 0$ is specified, and abstention occurs if $|h(\hat{\vec{x}})| < T$. The abstention function $r(\hat{\vec{x}})$ thus takes the following form: 
\begin{equation}
\label{eq:abstention_function}
r(\hat{\vec{x}})=
\begin{cases}
1, & \text{if } |h(\hat{\vec{x}})| \geq T \\
0, & \text{if } |h(\hat{\vec{x}})| < T \\
\end{cases}.
\end{equation}
\end{definition}

This is a simple but effective choice of abstention since the scoring function's magnitude usually reflects the confidence of the classifier, which directly relates to the conditional loss at $\vec{x}$. For example, the scoring function for Bayes' classifier, i.e., $h(\vec{x})=p(y=1|\vec{x})-p(y=0|\vec{x})$, where $p(y|\vec{x})$ is the posterior distribution, is more ``uncertain'' about the data when $h(\vec{x})\approx 0$ and the loss, represented by the Bayes risk, decreases monotonically as $h(\vec{x})$ grows further away from zero.
With this definition, the principal's problem reduces to finding the optimal threshold $T^*$ that minimizes its expected loss $L(T)$.

Consider a one-dimensional example where $\mathcal{X}=[-2,2]$ and $x \sim \text{Unif}[-2,2]$; the label is given deterministically by $y=\textrm{sign}(x)$.
We fix the scoring function as the optimal scoring function without strategically manipulative agents, i.e.,
We apply the threshold abstention function defined in Definition \ref{as:abstention_threshold}, which in this specific case simplifies to:
\begin{eqnarray}
r(x)=
\begin{cases}
1, & \text{if } |x| \geq T \\
0, & \text{if } |x| < T \\
\end{cases}
\end{eqnarray}
The agent will either best respond with its true feature $x$ or a manipulate to reach the target of $x=T$, as this is the minimum level that achieves a positive classification and avoids abstention. This leads to the following result.

\begin{proposition}
\label{prop: agent_best_response_case_study}
For a fixed linear classifier $f(x) = x$, a threshold abstention rule with threshold $T$, and an agent with true feature $x$ and manipulation cost factor $K = \frac{1}{\sqrt{\gamma}}$, the agent's best response $\hat{x}$ is given by:
\begin{eqnarray}
\label{eq:agent_br_eqn}
\hat{x}(x)=
\begin{cases}
T, & \text{if } x \in (\max(-2, T-K), T) \\
x, & \text{otherwise}
\end{cases}
\end{eqnarray}
\end{proposition}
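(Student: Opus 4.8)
The plan is to analyze directly the maximization $\hat x \in \argmax_{z\in[-2,2]} U(z\mid x)$, where specializing Eq.~\eqref{eq:agent-utility} to this setting gives $U(z\mid x) = f(z)r(z) - \gamma(z-x)^2$ with $f(z)=\mathbf{1}_{z\geq 0}$ and $r(z)=\mathbf{1}_{|z|\geq T}$. The first step is to identify the \emph{accepted region} $A:=\{z\in[-2,2]: f(z)=r(z)=1\}$. Since $T\geq 0$, the conditions $z\geq 0$ and $|z|\geq T$ together reduce to $z\geq T$, so $A=[T,2]$ (assuming $0\le T\le 2$; if $T>2$ then $A=\varnothing$, no agent can ever be accepted, and every best response is $x$, a degenerate sub-case I would note separately). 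Hence $U(z\mid x)=\mathbf{1}_{z\in[T,2]}-\gamma(z-x)^2$. It is worth flagging here that on $[0,T)$ the classifier outputs $1$ but the principal abstains, so the reward there is still $0$: the only way to collect reward $1$ is to land in $[T,2]$, which already explains why the manipulation target will be $T$ rather than $0$.

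Next I would split according to the two scenarios described in Section~\ref{ssec:agent_best_response_fixed_f}. If $x\geq T$, then $x\in A$ and $U(x\mid x)=1$, which is the global maximum of $U(\cdot\mid x)$ on $[-2,2]$: the utility is $\leq 1$ everywhere, equal to $1-\gamma(z-x)^2<1$ off $\{x\}$ within $A$, and $\leq 0$ outside $A$; the unique maximizer is $z=x$, so $\hat x=x$. If $x<T$, I would maximize $U(\cdot\mid x)$ piecewise: on $[-2,T)$ it equals $-\gamma(z-x)^2$, maximized at $z=x$ (which lies in $[-2,T)$ since $-2\le x<T$) with value $0$; on $[T,2]$ it equals $1-\gamma(z-x)^2$, and since $x$ lies strictly to the left of the whole interval $[T,2]$, the squared cost is minimized at the closest point $z=T$, giving value $1-\gamma(T-x)^2$. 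Thus the agent manipulates to $T$ precisely when $1-\gamma(T-x)^2>0$, i.e.\ $(T-x)^2<1/\gamma=K^2$; since $x<T$ this is equivalent to $x>T-K$, and intersecting with $x\geq -2$ yields $x\in(\max(-2,T-K),T)$, with $\hat x=T$. Otherwise --- including the tie $x=T-K$, where both options give utility $0$ and we adopt the convention that the agent does not manipulate --- the best response is $x$. Collecting the cases gives exactly Eq.~\eqref{eq:agent_br_eqn}.

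I do not anticipate a serious obstacle: a maximizer exists by continuity of $U(\cdot\mid x)$ on the compact set $[-2,2]$, and the remainder is a finite case split. The points that need care are (i) correctly accounting for the ``wasted'' positive region $[0,T)$, so that the manipulation target is $T$; (ii) justifying that within $A=[T,2]$ the cost-minimizing accepted point is the endpoint $T$, which is immediate since the squared-distance cost is increasing in $|z-x|$ and $x<T\leq z$ for all $z\in[T,2]$; and (iii) pinning down the boundary behavior at $x=T-K$ and at $x=\max(-2,T-K)$, which is precisely what makes the left endpoint of the manipulation interval open in the statement.
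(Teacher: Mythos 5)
Your proof is correct and follows essentially the same route as the paper's: you compare the utility of staying at $x$ against manipulating to the cheapest accepted point $T$, obtain the condition $1-\gamma(T-x)^2>0$, i.e.\ $|T-x|<K$, and intersect with the feature domain to get $x\in(\max(-2,T-K),T)$. Your treatment is in fact slightly more careful than the paper's (explicitly identifying the accepted region $[T,2]$, justifying $T$ as the cost-minimizing target, and fixing the tie at $x=T-K$, where the paper loosely writes $U_{\text{no manip.}}(x)<0$ instead of $=0$), but these are refinements of the same argument rather than a different approach.
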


\subsection{The Principal's Optimal Abstention Threshold}
\label{ssec: dm_opt_T_case_study}

Given Definition \ref{as:abstention_threshold}, the principal's loss can be parameterized and rewritten as
$L(T) = \mathbb{E}_x[l(x, \hat{x}, T)]$, where $l(x, \hat{x}, T) := c \cdot \mathbf{1}_{|\hat{x}|<T} + \mathbf{1}_{|\hat{x}|>T} \cdot ( \mathbf{1}_{\{x<0\}} \cdot \mathbf{1}_{\{\hat{x}>0\}} + \mathbf{1}_{\{x>0\}} \cdot \mathbf{1}_{\{\hat{x}<0\}} )$. The function $l(x, \hat{x}, T)$ depends on the agent's manipulated feature $\hat{x}$ (given in \cref{prop: agent_best_response_case_study}) and takes values from $\{c,0,1\}$.

We denote the optimal abstention threshold when agents are strategic as $\bar{T}^*$. The following result shows the optimal $\bar{T}^*$ and the associated minimum principal's loss.

\begin{proposition}
\label{prop:dm_br}
Considering all cases of $K$ and $c$, the principal's optimal abstention threshold $\bar{T}^*$ and its corresponding minimum loss are given by:
\begin{eqnarray} 
\bar{T}^* &=& \begin{cases}
\min(K, 2) & \text{if } c < 0.5, 0 < K \leq 4 \\
\left[\frac{K}{2}, \min(K, 2)\right] & \text{if } c = 0.5, 0 < K \leq 4 \\
\frac{K}{2} & \text{if } c > 0.5, 0 < K \leq 4 \\
[0, 2] & \text{if } K > 4
\end{cases} \label{eq:dm_br_T}\\
L(\bar{T}^* ) &=& \begin{cases}
\frac{cK}{4} & \text{if } c < 0.5, 0 < K < 2 \\
\frac{1}{4}[K-2+c(4-K)] & \text{if } c < 0.5, 2 \leq K \leq 4 \\
\frac{K}{8} & \text{if } c \geq 0.5, 0 < K \leq 4 \\ 
\frac{1}{2} & \text{if } K > 4 \label{eq:dm_br_L}
\end{cases}
\end{eqnarray}
\end{proposition}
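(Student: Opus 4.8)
The plan is to reduce the principal's problem to minimizing an explicit, continuous, piecewise‑linear function of the single variable $T$ over the admissible range $T\in[0,2]$ (the range of $|h|$; a larger threshold makes $r\equiv 0$ and is degenerate). Feeding the agent's best response from \cref{prop: agent_best_response_case_study} into $L(T)=\mathbb{E}_x[l(x,\hat x,T)]$, I would split $[-2,2]$ into the manipulators $x\in(\max(-2,T-K),T)$ and everyone else. Each manipulator moves to $\hat x=T$, which is positively classified and accepted because $T\ge 0$ and $|T|\ge T$; since $l(1,y)=\mathbf{1}_{y=0}$ it therefore costs $1$ exactly when $x<0$ (a false positive) and $0$ otherwise. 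Each non‑manipulator keeps $\hat x=x$: it is classified correctly whenever $|x|\ge T$, and otherwise it lies in the ``residual abstention band'' $(-T,\max(-2,T-K)]$ and costs $c$. Because $x\sim\mathrm{Unif}[-2,2]$, the two relevant lengths are immediate.

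Carrying out those length computations — the only place any care is needed, since one must track the orderings of $-2,\,-T,\,T-K,\,0,\,T$ — I expect to reach the closed form, valid for every $K>0$ and every $T\in[0,2]$,
\begin{equation*}
L(T)=\tfrac14\Big(\min\{2,(K-T)_+\}+c\,(2T-K)_+\Big),\qquad (z)_+:=\max\{z,0\},
\end{equation*}
where the first term counts manipulation‑induced false positives and the second counts abstentions. (For instance, when $T-K<-2$ the manipulation interval is all of $(-2,T)$, which simultaneously forces the first term to $\tfrac12$ and empties the abstention band, so the formula is still correct; likewise $2T-K<0$ throughout $[0,2]$ whenever $K>4$, so the second term vanishes.) This $L(T)$ is piecewise linear with breakpoints only at $T=K-2,\;T=K/2,\;T=K$: scaled by $4$, its slope is $0$ on $\{T<K-2\}\cup\{T>K\}$, picks up $-1$ from the first term on $(K-2,K)$, and picks up $2c$ from the second term on $(K/2,\infty)$. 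Restricted to $[0,2]$ this yields, in order: a flat piece at value $\tfrac12$ on $[0,\max(0,K-2)]$; a strictly decreasing piece of slope $-\tfrac14$ reaching the value $K/8$ at $T=K/2$; a piece of slope $\tfrac{2c-1}{4}$ on $[K/2,\min(K,2)]$; and, only when $K<2$, a strictly increasing piece of slope $\tfrac{2c}{4}>0$ on $[K,2]$, which therefore never contains a minimizer.

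It then remains only to minimize this shape, which is dictated entirely by the sign of $2c-1$. If $c<\tfrac12$ the $(2c-1)$‑slope piece is decreasing, so the minimizer is its right endpoint $\bar T^*=\min(K,2)$, with value $\tfrac{cK}{4}$ when $K\le 2$ and $\tfrac14[K-2+c(4-K)]$ when $2\le K\le 4$ (both obtained by substitution into the closed form). If $c>\tfrac12$ that piece is increasing, so $\bar T^*=K/2$ with value $K/8$. If $c=\tfrac12$ the piece is flat, so every $T\in[K/2,\min(K,2)]$ is optimal, still at value $K/8$. If $K>4$, the whole interval $[0,2]$ lies inside the initial flat piece, so $L\equiv\tfrac12$ and any $T\in[0,2]$ works. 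A short final check that the pieces agree at the shared breakpoints ($K-2\mapsto\tfrac12$, $K/2\mapsto K/8$, $K\mapsto \tfrac{cK}{4}$) confirms these are genuine minima and reproduces the stated formulas. The main obstacle is purely the derivation of the closed form for $L(T)$: once the manipulator split and the residual‑abstention length $(2T-K)_+$ are pinned down correctly across all position orderings, the optimization is routine.
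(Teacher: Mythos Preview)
Your proposal is correct and follows essentially the same route as the paper: compute $L(T)$ as a piecewise-linear function of $T$ by splitting into manipulators (false positives on $(\max(-2,T-K),0)$) and residual abstainers (on $(-T,\max(-2,T-K)]$), identify the breakpoints $K-2,\;K/2,\;K$, and then case on the sign of $2c-1$ to locate the minimizer. The only cosmetic difference is that you package the two integral terms into the single closed form $L(T)=\tfrac14\big(\min\{2,(K-T)_+\}+c(2T-K)_+\big)$, which lets you handle all three $K$-regimes at once rather than splitting into $0<K<2$, $2\le K\le 4$, $K>4$ as the paper does; the underlying slope analysis is identical.
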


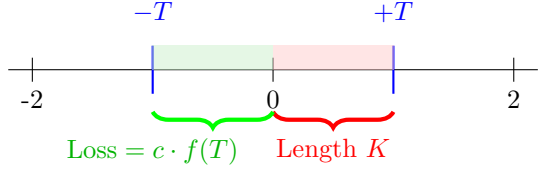
\begin{wrapfigure}{!htbp}{0.65\textwidth}
    \begin{minipage}{0.65\textwidth} 
    \vspace{-10pt}
\centering
\begin{tikzpicture}[x=2cm,y=2cm,scale=0.8]
\def\Tval{1}
\def\Kval{1}

\draw[-] (-2.2,0) -- (2.2,0);

\foreach \x in {-2,0,2}
\draw (\x,0.1) -- (\x,-0.1) node[below] {\x};

\draw[blue, thick] (-\Tval,-0.2) -- node[above,yshift=5mm] {$-T$} (-\Tval,0.2);
\draw[blue, thick] (\Tval,-0.2) -- node[above,yshift=5mm] {$+T$} (\Tval,0.2);

\fill[red!20, opacity=0.5] (0,0) rectangle (\Tval,0.2);
\fill[green!70!black!20, opacity=0.5] (-\Tval,0) rectangle (0,0.2);

\draw[red, ultra thick, decorate, decoration={brace,amplitude=6pt, mirror}]
(0,-0.35) -- (\Kval,-0.35) node[midway,below=6pt,red] {$\text{Length } K$};

\draw[green, ultra thick, decorate, decoration={brace,amplitude=6pt}]
(0,-0.35) -- (-\Kval,-0.35) node[right,below=6pt,green!70!black] {$\text{Loss} =c \cdot f(T)$};
\end{tikzpicture}
\caption{Illustration of expected loss contributions for the uniform distribution case study when $c < 0.5$ and $0 < K < 2$.}
\label{fig:number_line_1}
\vspace{-20pt}
\end{minipage}
\end{wrapfigure}

To provide intuition for the optimal threshold $\bar{T}^*$ and minimum loss $L(\bar{T}^*)$, we examine the case where $c < 0.5$ and $0 < K < 2$. In this scenario, the optimal threshold is $\bar{T}^*=K$ and the minimum loss is $L(\bar{T}^*)=\frac{cK}{4}$. Figure \ref{fig:number_line_1} visualizes the regions contributing to the expected loss $L(T)$. The horizontal axis represents the true feature $x\in[-2,2]$, uniformly distributed with density $p(x)=\frac{1}{4}$. The vertical blue lines mark the abstention threshold $T$ and $-T$.

The green shaded region, spanning from $-T$ to $0$, illustrates instances where the principal incurs an abstention loss. For a true feature $x$ in this interval, the agent does not manipulate. The condition $|x|<T$ triggers the abstention rule $r(x)=0$, leading to a cost $c$ for the decision maker. Given the uniform density, this region's contribution to the total expected loss is $\frac{cT}{4}$.

The red shaded region, spanning from $0$ to $T$, represents true feature values $x$ for which agents manipulate to $\hat{x}=T$, as per Proposition \ref{prop: agent_best_response_case_study}. When the agent submits $\hat{x}=T$, the classifier predicts $\hat{y}=1$, which correctly aligns with the true label, resulting in zero misclassification loss. Furthermore, since $|\hat{x}|=T \ge T$, no abstention loss is incurred. Therefore, this region contributes zero loss to the decision maker's expected loss $L(T)$. Thus, the loss function in this case yields $\bar{T}^* = K$ and an optimal loss $L(\bar{T}^*) = \frac{cK}{4}$.

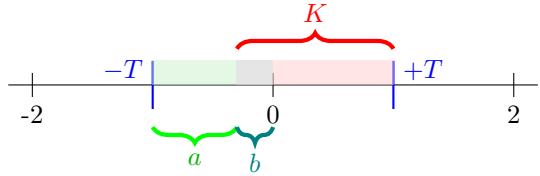
\begin{wrapfigure}{!htbp}{0.65\textwidth}
    \begin{minipage}{0.65\textwidth} 
    \vspace{-10pt}
\centering
\begin{tikzpicture}[x=2cm,y=2cm,scale=0.8]
\def\Tval{1}
\def\Kval{1}

\draw[-] (-2.2,0) -- (2.2,0);

\foreach \x in {-2,0,2}
\draw (\x,0.1) -- (\x,-0.1) node[below] {\x};

\draw[blue, thick] (-\Tval,-0.2) -- node[left,yshift=2mm] {$-T$} (-\Tval,0.2);
\draw[blue, thick] (\Tval,-0.2) -- node[right,yshift=2mm] {$+T$} (\Tval,0.2);

\fill[red!20, opacity=0.5] (-0.3,0) rectangle (\Tval,0.2);
\fill[green!70!black!20, opacity=0.5] (-\Tval,0) rectangle (-0.3,0.2);
\fill[teal!20, opacity=0.5] (-0.3,0) rectangle (0,0.2);

\draw[red, ultra thick, decorate, decoration={brace,amplitude=6pt}]
(-0.3,0.3) -- (\Kval,0.3) node[midway,above=6pt,red] {$K$};

\draw[green, ultra thick, decorate, decoration={brace,amplitude=6pt}]
(-0.3,-0.35) -- (-\Tval,-0.35) node[midway,below=6pt,green!70!black] {$a$};

\draw[teal, ultra thick, decorate, decoration={brace,amplitude=6pt}]
(0,-0.35) -- (-0.3,-0.35) node[midway,below=6pt,teal] {$b$};
\end{tikzpicture}
\caption{Illustration of expected loss contributions when $c = 0.5$ and $0 < K < 2$. In this scenario, the optimal threshold $\bar{T}^*$ is a range $[\frac{K}{2}, K]$.}
\label{fig:number_line_2}
\vspace{-20pt}
    \end{minipage}
\end{wrapfigure}

Interestingly, there are cases where $\bar{T}^*$ can take a range of values. For instance, when the abstention cost $c = 0.5$ and the manipulation cost factor $K \in (0, 2)$, the optimal threshold $\bar{T}^*$ is $[\frac{K}{2}, K]$, with a constant minimum expected loss of $L(\bar{T}^*) = \frac{K}{8}$.
Figure \ref{fig:number_line_2}'s red shaded region shows where agents manipulate to $\hat{x}=T$. According to Proposition \ref{prop: agent_best_response_case_study} and the fact that $|T-K|< 2$, the manipulation region simplifies from $(\max(-2,T-K),T)$ to $(T-K,T)$, with a total length of $K$. Within this region, only agents with $x < 0$ are misclassified, contributing a misclassification loss of $\frac{K-T}{4}$ (region $b$).
The green shaded region, of length $2T-K$, represents cases where the principal incurs an abstention loss. These agents do not manipulate, contributing an abstention loss of $\frac{2T-K}{8}$. 
The total expected loss is calculated by summing both red and green regions, resulting in $\frac{K}{8}$. Notice that this is independent of $T$, illustrating the non-uniqueness of $\bar{T}^*$.

\subsection{The Equilibrium Without Abstention} \label{sec:no_abstention_summary}

Next, we analyze the equilibrium outcome when the decision maker does not have the ability to abstain from making a decision. The setup remains consistent with our primary model in the case study, but without the abstention function $r(x)$.
In this scenario, an agent's utility simplifies to $U(\hat{x}|x) = \mathbf{1}_{\hat{x} \geq 0} - \gamma(\hat{x}-x)^2$, reflecting their sole goal of achieving positive classifier prediction. Clearly, the agent's best response $\hat{x}(x)$ under this utility function is given by:
$$
\hat{x}(x) = \begin{cases}
0 & \text{if } -K \leq x < 0 \\
x & \text{otherwise}
\end{cases}
$$
where $K = 1/\sqrt{\gamma}$ represents the manipulation cost factor. The decision maker's loss $L$ is then the expected misclassification loss, defined as $L = \mathbb{E}_x[l(x,\hat{x})]$, where $l(x,\hat{x}) = \mathbf{1}_{x<0} \cdot \mathbf{1}_{\hat{x} \ge 0} + \mathbf{1}_{x>0} \cdot \mathbf{1}_{\hat{x} < 0}$. Given the agent's best response, misclassification occurs only when agents with a true negative feature ($x<0$) successfully manipulate to achieve a positive classification ($\hat{x} \ge 0$).

\begin{proposition}
\label{prop:no_abs_loss}
The expected loss $L_{no\_abstention}$ for the decision maker in the absence of abstention is piecewise defined based on the value of $K$:
\begin{equation} \label{eq:exp-noabs}
L_{no\_abstention} = \begin{cases}
\frac{K}{4} & \text{if } K \le 2 \\
\frac{1}{2} & \text{if } K > 2
\end{cases}
\end{equation}

\end{proposition}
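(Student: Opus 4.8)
The plan is to evaluate $L_{no\_abstention}=\mathbb{E}_x[l(x,\hat{x})]$ directly by splitting the support $[-2,2]$ according to the agent's best response $\hat{x}(x)$ stated just above the proposition, and observing that the pointwise loss $l(x,\hat{x})=\mathbf{1}_{x<0}\mathbf{1}_{\hat{x}\ge0}+\mathbf{1}_{x>0}\mathbf{1}_{\hat{x}<0}$ is nonzero only for true-negative agents who manage to manipulate into the positively classified region.

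First I would check that all other agents contribute zero. For $x\in(0,2]$ the agent keeps $\hat{x}=x>0$, so $f(\hat{x})=1=y$ and $l=0$. For the true-negative agents that do not manipulate — those with $x<0$ lying outside $[-K,0)$, a set that is nonempty exactly when $K<2$ — we have $\hat{x}=x<0$, so $f(\hat{x})=0=y$ and again $l=0$. The single point $x=0$ is a null set under the uniform density and can be discarded.

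Next I would isolate the one region that does incur loss: true-negative agents with $x\in[\max(-2,-K),0)$ manipulate to $\hat{x}=0$, where $f(0)=\mathbf{1}_{0\ge0}=1$ while $y=0$, so $l(x,\hat{x})=1$ there. (The indifferent boundary agent at $x=-K$ when $K\le2$ is a measure-zero event, so whether it manipulates is irrelevant to the integral.) It then remains to integrate the constant density $p(x)=\tfrac14$ over this interval.

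Finally I would split on the magnitude of $K$: if $K\le2$ the misclassification interval is $[-K,0)$ of length $K$, so $L_{no\_abstention}=\int_{-K}^{0}\tfrac14\,dx=\tfrac{K}{4}$; if $K>2$ the interval is clipped at the left end of the support to $[-2,0)$ of length $2$, so $L_{no\_abstention}=\int_{-2}^{0}\tfrac14\,dx=\tfrac12$, yielding the claimed piecewise formula. There is no genuine obstacle here; the only subtlety worth stating carefully is the clipping of the manipulation region at $x=-2$ — equivalently, the fact that once $K\ge2$ every true-negative agent can afford the manipulation cost — which is precisely what produces the transition at $K=2$.
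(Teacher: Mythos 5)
Your proposal is correct and follows essentially the same route as the paper's proof: identify that the pointwise loss is $1$ exactly on the manipulation region $[\max(-2,-K),0)$ of true-negative agents and $0$ elsewhere, then integrate the uniform density $\tfrac14$ over that interval, splitting at $K=2$ where the region is clipped by the support boundary. Your extra remarks on measure-zero boundary cases are fine but not needed.
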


This result highlights how, without the abstention mechanism, the decision maker faces a loss due to strategic manipulation directly proportional to the manipulation cost factor $K$ up to a certain point.
Notably, Eq. \eqref{eq:dm_br_L} is no greater than Eq. \eqref{eq:exp-noabs} for all parameter cases, justifying the value of the abstention mechanism in mitigating the decision maker's loss in the presence of strategic manipulation as shown in Theorem \ref{thm:benefit_abstention}.

\subsection{Expected Manipulation of Unqualified Agents} \label{sec:expected_manipulation_summary}
Beyond analyzing the principal's loss, we also examine the expected amount of manipulation by unqualified agents at equilibrium. With abstention, agents who are qualified may have an incentive to manipulate in order to be positively classified and not abstained by the classifier. However, these agents' manipulation does impact the principal's loss. Thus, we focus on the \textit{expected manipulation by unqualified agents}, who, otherwise would have been classified as negative, manipulate to receive a positive outcome. We define expected manipulation as the average absolute difference between an agent's true feature $x$ and their manipulated feature $\hat{x}$ under best response, $D = \mathbb{E}[|x - \hat{x}(x)|]$.

\vspace{-10pt}
\subsubsection{Expected Manipulation Without Abstention:}
In the model without abstention, the expected manipulation, $E_{\text{no\_abstention}}$, is solely a function of the manipulation cost factor $K$. Agents manipulate when their true feature $x$ falls within $[-K, 0)$. 
The expected manipulation increases quadratically in $K$ until saturation:
\[
E_{\text{no\_abstention}} = \begin{cases}
\frac{K^2}{8} & \text{if } 0 < K \leq 2 \\
\frac{1}{2} & \text{if } K > 2
\end{cases}
\]

\vspace{-10pt}
\subsubsection{Expected Manipulation With Abstention:}
In the model with abstention, qualified agents can also manipulate to be positively classified. To only consider unqualified agents, we restrict our integration over $[-2, 0)$. The expected manipulation, $E_{\text{with\_abstention}}$, varies across different $K$ values:

\[
E_{\text{with\_abstention}} = \begin{cases}
0 & \text{if } 0 < K \le 2 \text{ and } c < 0.5 \\
\frac{K^2}{8} - \frac{1}{2} & \text{if } 2 \le K \le 4 \text{ and } c < 0.5 \\
\frac{3K^2}{32} & \text{if } 0 < K \le 4 \text{ and } c \ge 0.5 \\
\frac{\bar{T^*}}{2} + \frac{1}{2} & \text{if } K > 4 \quad (\text{where } 0 \le \bar{T^*} \le 2)
\end{cases}
\]

\vspace{-10pt}
\subsubsection{Comparing $E_{\text{no\_abstention}}$ and $E_{\text{with\_abstention}}$:}

The comparison reveals that the expected manipulation by unqualified agents is highly dependent on both the manipulation cost $\gamma$ (represented by $K = \frac{1}{\sqrt{\gamma}}$) and the abstention cost $c$. Notably, abstention often reduces manipulation by unqualified agents: for high $\gamma$ (low $K$, $0 < K \le 2$), abstention consistently leads to lower expected manipulation by unqualified agents, regardless of the abstention cost $c$. This represents a primary benefit of adopting an abstention option. However, the comparison also reveals a potential for increased manipulation in certain cases. Specifically, for intermediate $\gamma$ ($2 < K \le 4$), abstention can either reduce or increase manipulation. This implies that simply introducing an abstention mechanism does not guarantee a reduction in this type of unqualified manipulation across all parameters. Furthermore, for very low $\gamma$ (high $K$, $K > 4$), the introduction of abstention can lead to either equal or increased manipulation by unqualified agents, depending on the principal's chosen threshold. This comparison indicates that the effectiveness of abstention in deterring manipulation is highly dependent on the parameters of the system. While the proportion of manipulative unqualified agents is reduced, the average amount of manipulation, as measured by $E_{\textrm{no\_abstention}}$ and $E_{\textrm{with\_abstention}}$ can be the opposite. Overall, abstention can still serve as a valuable tool for deterring manipulation when manipulation costs are substantial.

\section{Simulation Results}
\label{sec:sim}

\subsection{Simulation Setup and Impact of System Parameters}
\label{sim_setup}
We simulate the Stackelberg game from Section \ref{sec: uniform_example}, where the decision maker sets an abstention threshold $T$ and agents respond strategically. We compare the unconstrained setting (truthful agents, optimal threshold $T^*$) to the constrained setting (manipulating agents, optimal threshold $\bar{T}^*$). For each threshold $T \in [0.01, 2.0]$ with step size 0.01, we draw $100,000$ features $x\sim\textrm{Unif}[-2,2]$ and compute labels via $y = \textbf{1}_{x+\epsilon > 0}$, with $\epsilon \sim \mathcal{N}(0,\sigma)$. In the constrained case, agents either report $x$ or manipulate to $T$, following Proposition \ref{prop: agent_best_response_case_study}. We compute the pointwise loss and average over samples to estimate $L(T)$. Grid search yields optimal $T^*$ and $\bar{T}^*$. We then study how these values change as we vary $\sigma$, $\gamma$, and $c$, while fixing the other two to default values of $\gamma = 0.4444$, $c = 0.3$, $\sigma = 0.5$.

\begin{figure}[t]
\centering
\begin{subfigure}[b]{0.32\textwidth}
\includegraphics[width=\textwidth]{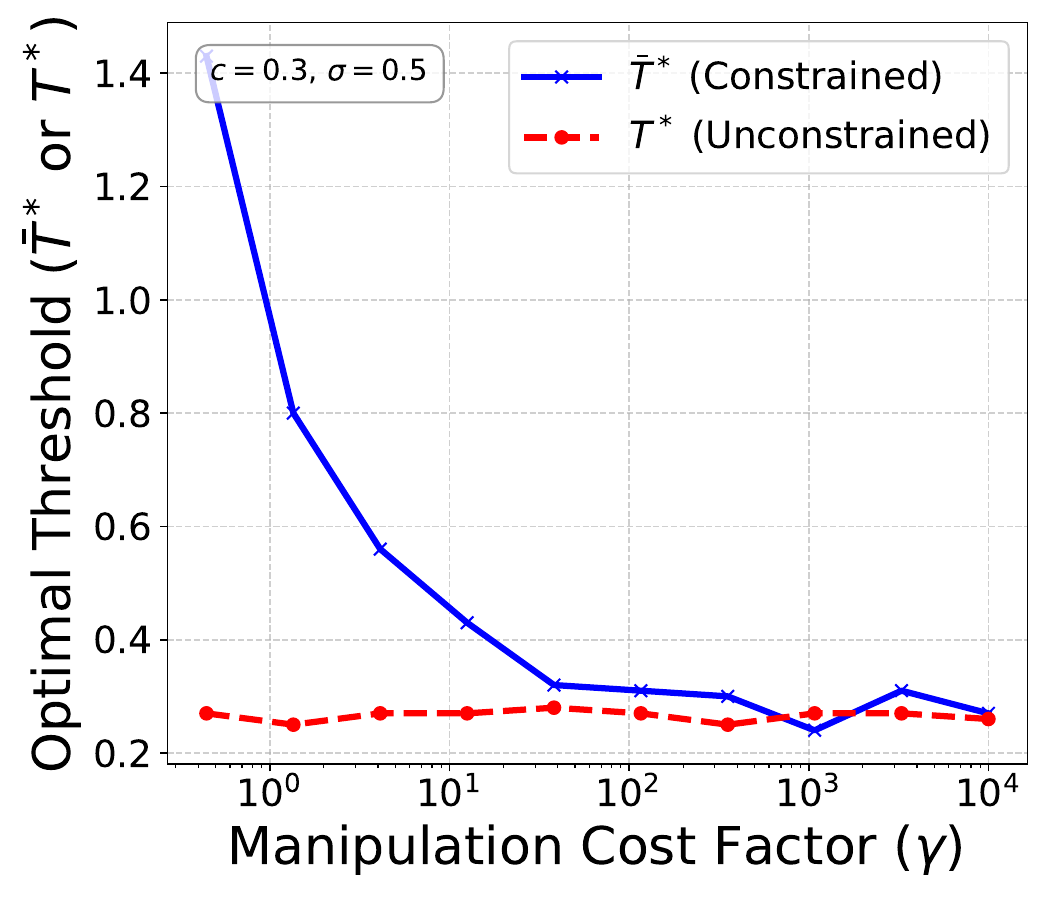}
\caption{Manipulation cost $\gamma$}
\label{fig:T_vs_gamma}
\end{subfigure}
\hfill
\begin{subfigure}[b]{0.32\textwidth}
\includegraphics[width=\textwidth]{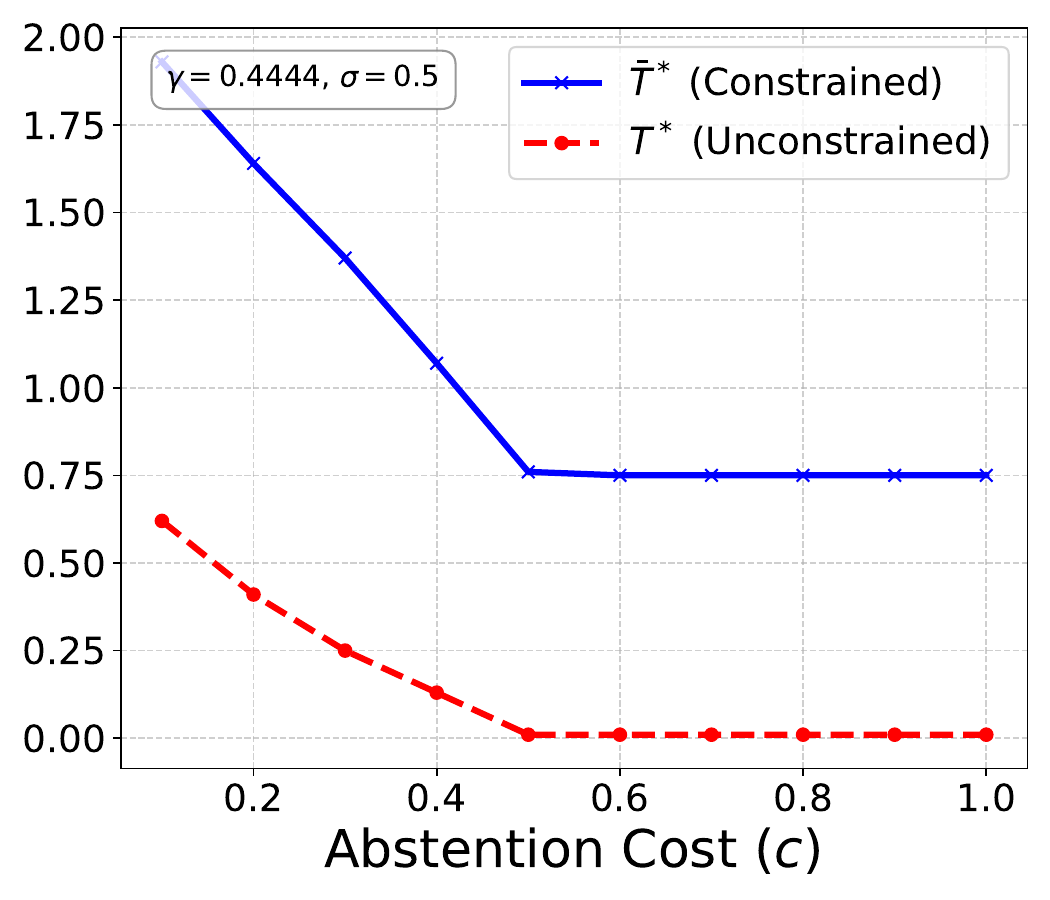}
\caption{Abstention cost $c$}
\label{fig:T_vs_c}
\end{subfigure}
\hfill
\begin{subfigure}[b]{0.32\textwidth}
\includegraphics[width=\textwidth]{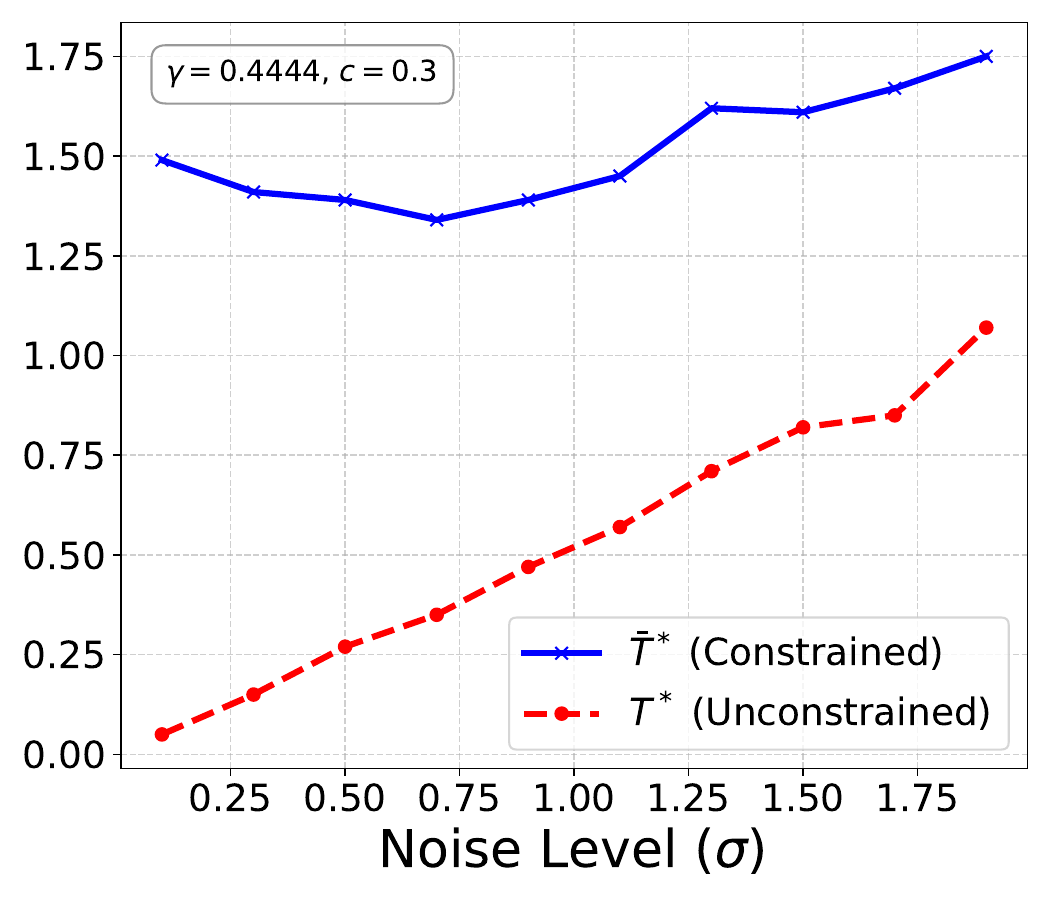}
\caption{Noise level $\sigma$}
\label{fig:T_vs_sigma}
\end{subfigure}
\caption{Optimal thresholds $\bar{T}^*$ and $T^*$ under varying parameters.}
\label{fig:constr_vs_unconstr}
\vspace{-20pt}
\end{figure}

Figure \ref{fig:T_vs_gamma} shows that in the constrained case, the optimal threshold $\bar{T}^*$ decreases as the agent's manipulation cost $\gamma$ increases (with abstention cost fixed at $c=0.3$). When manipulation is cheap, the principal sets a high $\bar{T}^*$ to deter manipulation. As $\gamma$ increases, manipulation becomes costlier, allowing the principal to lower $\bar{T}^*$. In contrast, the unconstrained threshold $T^*$ remains constant, and $\bar{T}^* \rightarrow T^*$ as manipulation becomes prohibitively costly.

Figure \ref{fig:T_vs_c} shows how the optimal thresholds vary with abstention cost $c$. As $c$ increases, $T^*$ and $\bar{T}^*$ generally decrease or stay flat, reflecting the trade-off between abstention and misclassification. Low $c$ allows the principal to set a higher threshold and abstain more. As $c$ grows, abstention becomes costlier, encouraging lower thresholds. Interestingly, both constrained and unconstrained thresholds exhibit a turning point around the same $c=0.5$, after which the abstention cost becomes even worse than a random guess. The post-turning position of the constrained threshold is remarkably higher, reflecting the abstention's hedging effect against agents' manipulation.

Figure \ref{fig:T_vs_sigma} shows how optimal thresholds vary with noise level $\sigma$. In the unconstrained case, rising noise increases uncertainty in true labels, prompting the principal to raise $T^*$ to avoid misclassification. In the constrained case, $\bar{T}^*$ first dips, because moderate noise weakens the reliability of manipulation, allowing a lower threshold. But at high noise levels, the principal again raises $\bar{T}^*$ to reduce their misclassification risk.

Figure \ref{fig:constr_vs_unconstr} confirms that the constrained threshold $\bar{T}^*$ is consistently higher than the unconstrained $T^*$, despite sampling noise. This matches Theorem \ref{theorem:no-larger-accept}, which states that principals facing strategic agents abstain more. The elevated threshold deters manipulation by unqualified agents, who would otherwise exploit a lower threshold to gain acceptance, increasing misclassification risk. Thus, {\em the constrained principal raises $\bar{T}^*$ to guard against gaming.}

\subsection{Mitigating Strategic Harm through Optimal Abstention}
Strategic agents manipulate inputs to maximize their utility, often at the principal's expense. We define the resulting increase in expected loss as the "harm" caused by strategic behavior, measured relative to a non-strategic baseline. This harm is quantified in two settings. When the principal cannot abstain (i.e., $T = 0$), the harm is defined as $H_{\text{no abstention}} = L(T=0, \text{strategic}) - L(T=0, \text{non-strategic})$. When the principal can optimally abstain using threshold $\bar{T}^*$, we define $H_{\text{abstention}} = L(\bar{T}^*, \text{strategic}) - L(T^*, \text{non-strategic})$. The effectiveness of abstention in reducing harm is captured by the difference: $$\Delta H = \{L(\bar{T}^*, \text{strat.}) - L(T^*, \text{non-strat.})\} - \{ L(T=0, \text{strat.}) - L(T=0, \text{non-strat.}) \}$$

\begin{figure}[t]
\centering
\begin{subfigure}[b]{0.32\textwidth}
\includegraphics[width=\textwidth]{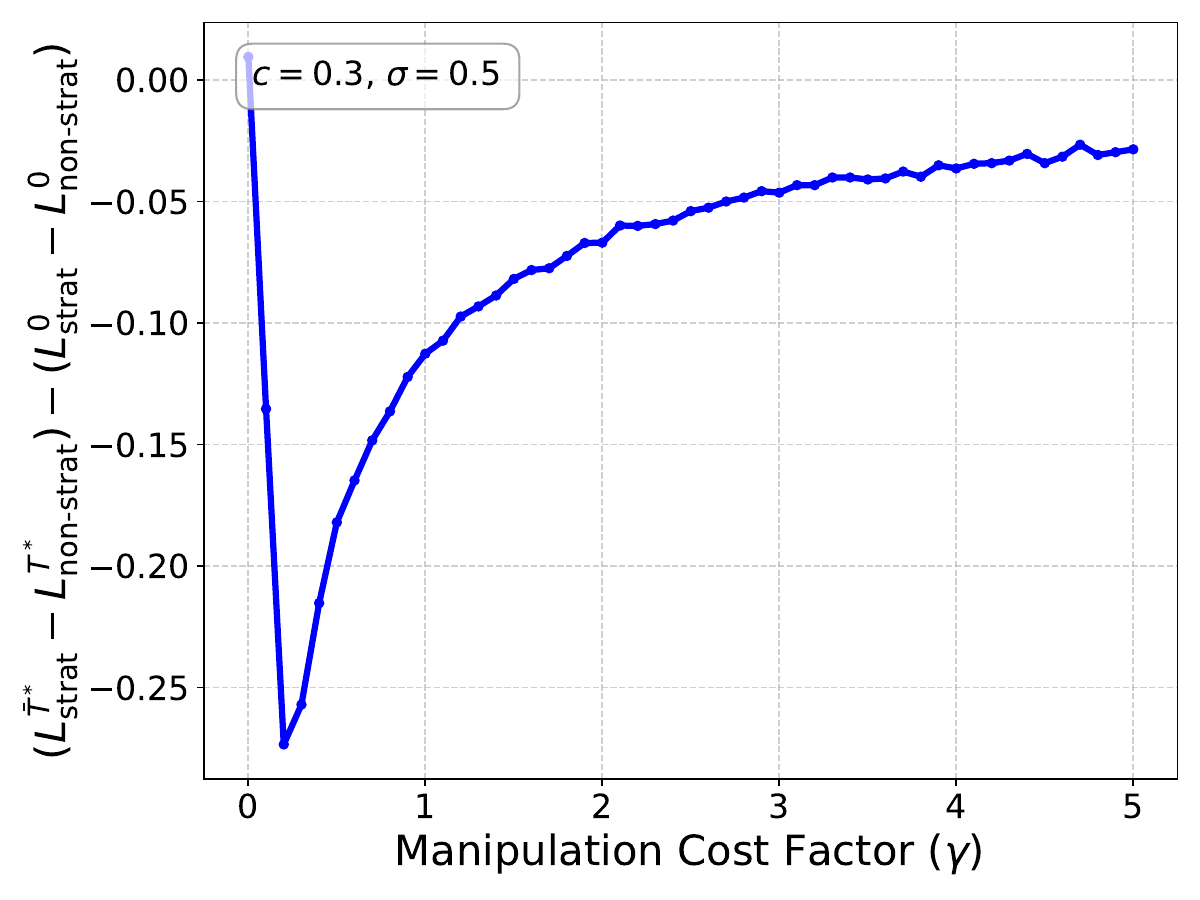}
\caption{Manipulation cost $\gamma$}
\label{fig:diff_vs_gamma}
\end{subfigure}
\hfill
\begin{subfigure}[b]{0.32\textwidth}
\includegraphics[width=\textwidth]{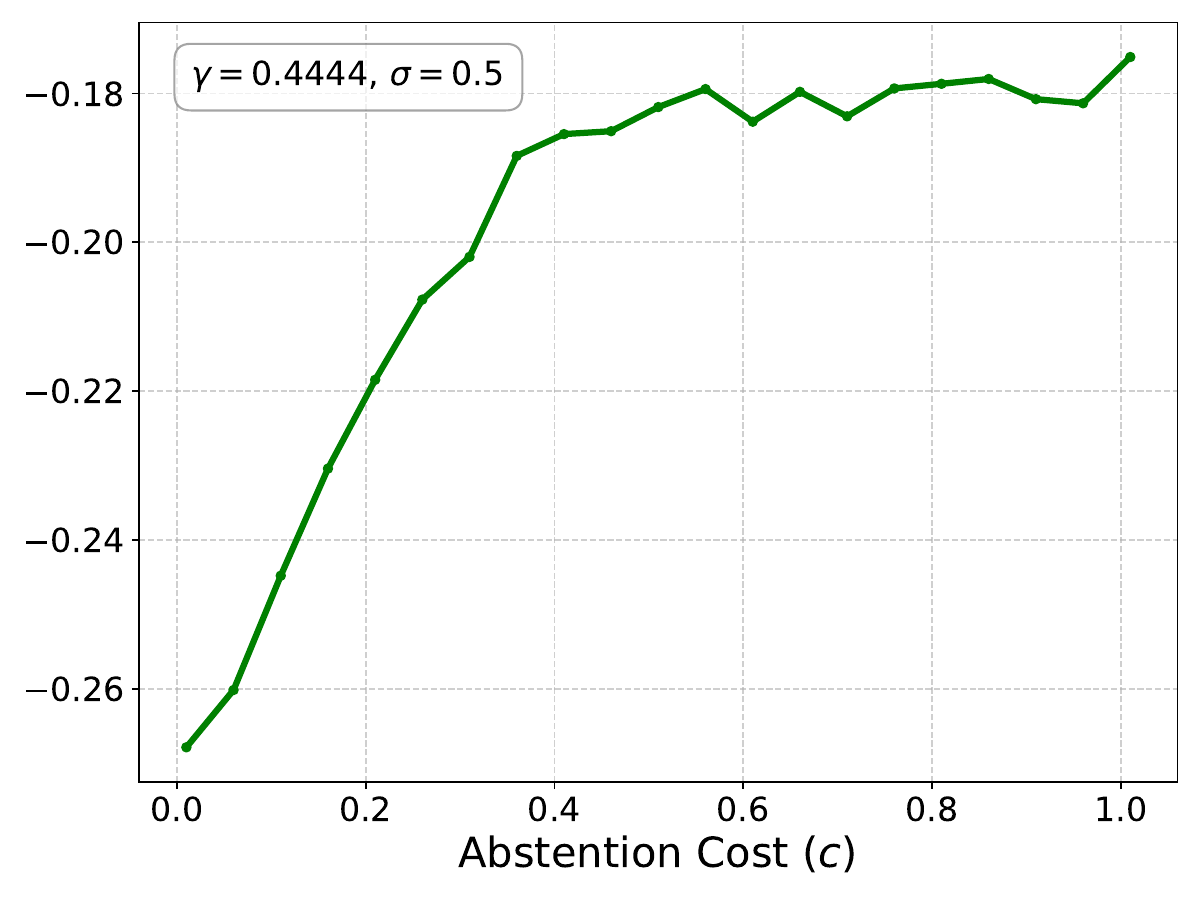}
\caption{Abstention cost $c$}
\label{fig:diff_vs_c}
\end{subfigure}
\hfill
\begin{subfigure}[b]{0.32\textwidth}
\includegraphics[width=\textwidth]{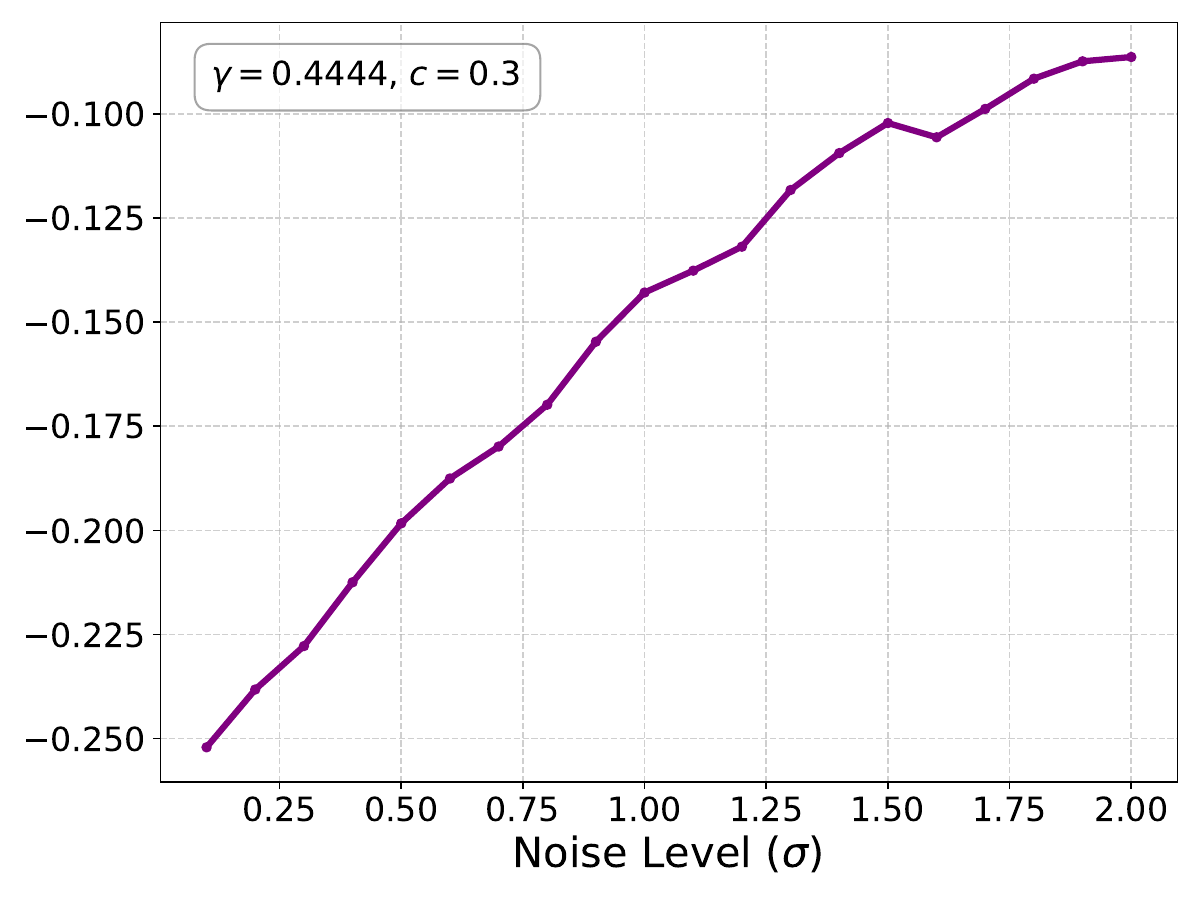}
\caption{Noise level $\sigma$}
\label{fig:diff_vs_sigma}
\end{subfigure}
\caption{Harm reduction via optimal abstention across system parameters.}
\label{fig:reduce_harm}
\vspace{-20pt}
\end{figure}

A negative $\Delta H$ indicates that abstention reduces the harm caused by strategic agents. We evaluate this effect by sweeping key parameters, using the simulation setup from Section~\ref{sim_setup}. When $\gamma$ is very small, manipulation is effectively free, giving agents near-unlimited capacity to alter their features. In this case, abstention has little effect on the principal's loss, resulting in an almost zero leftmost value of Figure~\ref{fig:diff_vs_gamma}. As $\gamma$ increases, manipulation incurs real cost, and agents face a tradeoff between benefit and expense. Here, optimal abstention becomes highly effective, yielding a negative $\Delta H$ and significantly reducing strategic harm. As $\gamma$ continues to rise, manipulation becomes prohibitive, and strategic behavior converges with non-strategic behavior. In this case, both $H_{\textrm{abstention}}$ and $H_{\textrm{no abstention}}$ converge to zero as $\gamma$ increases. Thus, $\Delta H$ increases with $\gamma$.

Figure~\ref{fig:diff_vs_c} shows that $\Delta H$ generally increases with abstention cost $c$, as one would expect intuitively. $H_{\text{no abstention}}$ remains constant, while as $c$ increases, the principal abstains less, leading to higher $H_{\text{abstention}}$ and a subsequent rise in $\Delta H$. Figure~\ref{fig:diff_vs_sigma} shows that $\Delta H$ increases as noise $\sigma$ increases. Greater noise reduces both label predictability and the effectiveness of strategic manipulation. In highly noisy settings, the behaviors of strategic and non-strategic agents become indistinguishable, so abstention yields little additional benefit and $\Delta H$ converges toward zero.

\section{Conclusion}
\label{sec:conclusion}
This paper studied abstention strategies in strategic classification, modeled as a Stackelberg game where the principal moves first and agents respond strategically. We showed that optimal abstention never increases the principal's loss and can deter manipulation when costs are sufficiently high. Principals also tend to abstain more in the presence of strategic agents. Using a linear classifier with uniformly distributed agents, we analyzed optimal thresholds and manipulation equilibria under varying costs. Our theoretical results were accompanied by simulations that illustrated the effects of manipulation cost, abstention cost, and noise on optimal abstention behavior. Future directions include a more rigorous analysis of explicit uncertainty in the classification process, such as the modeling of noisy features we simulated.

\bibliographystyle{splncs04}
\bibliography{references}

\appendix
\setlength{\abovedisplayskip}{3pt} 
\setlength{\belowdisplayskip}{2pt} 

\section{Supplementaries of Section \ref{sec:fixed_f_analysis}}
\subsection{Proof of Theorem \ref{theorem:negative-coincides}}
Since the classifier $f$ is fixed, we simplify the notation of the principal's expected loss to $L(r)$ in this proof to emphasize the optimizing variable. When $\bar{r}^*$ is the optimal constrained abstention function, it must satisfies $L(\bar{r}^*)\leq L(r)$, $\forall r$. The minimum expected loss can be written as
\begin{equation}\label{eq:L-decompose}
\begin{aligned}
    L(\bar{r}^*) = &\;\int_{\vec{x},y}\left[l(f(\hat{\vec{x}}),y)\bar{r}^*(\hat{\vec{x}}) + c(1-\bar{r}^*(\hat{\vec{x}}))\right]p(\vec{x},y)\;dyd\vec{x} \\
    =&\;\int_{\hat{\vec{x}},y}\left[l(f(\hat{\vec{x}}),y)\bar{r}^*(\hat{\vec{x}}) + c(1-\bar{r}^*(\hat{\vec{x}}))\right]q(\hat{\vec{x}},y)\;dyd\hat{\vec{x}} \\
    =&\;\underbrace{\int_{\hat{\vec{x}},y:f(\hat{\vec{x}})=0}\left[l(f(\hat{\vec{x}}),y)\bar{r}^*(\hat{\vec{x}}) + c(1-\bar{r}^*(\hat{\vec{x}}))\right]q(\hat{\vec{x}},y)\;dyd\hat{\vec{x}}}_{A} \\
    &\quad+ \underbrace{\int_{\hat{\vec{x}},y:f(\hat{\vec{x}})=1}\left[l(f(\hat{\vec{x}}),y)\bar{r}^*(\hat{\vec{x}}) + c(1-\bar{r}^*(\hat{\vec{x}}))\right]q(\hat{\vec{x}},y)\;dyd\hat{\vec{x}}}_{B}
\end{aligned}
\end{equation}
where $q(\hat{\vec{x}},y)$ denote the joint density of the post-response features and the true label, and $A$ (resp. $B$) represents the expected loss incurred from negatively (resp. positively) labeled data points. Notice that the post-response joint density can be decomposed by
\begin{equation}\label{eq:post-join}
    q(\hat{\vec{x}},y) = \int_{\vec{x}}p(y|\vec{x})q(\hat{\vec{x}}|\vec{x})p(\vec{x})\;d\vec{x}
\end{equation}
where we utilized the fact that $y$ is independent of $\hat{\vec{x}}$ given true attribute $\vec{x}$ since $\hat{\vec{x}}$ can be written as a function of $\vec{x}$. Let $N_{f,\bar{r}^*}(\hat{\vec{x}})$ be the set of feature vectors in $\mathcal{X}$ whose post-response features under $f$ and $\bar{r}^*$ equals $\hat{\vec{x}}$. Formally, when $f(\hat{\vec{x}})=0$ or $\bar{r}^*(\hat{\vec{x}})=0$, we have 
\begin{equation}\label{eq:set-negative}
N_{f,\bar{r}^*}(\hat{\vec{x}})=\begin{cases}
    \emptyset & \exists\vec{x}\in B_{\frac{1}{\gamma}}(\hat{\vec{x}})\; \textrm{s.t.}\; f(\vec{x})=\bar{r}^*(\vec{x})=1\\ 
    \{\hat{\vec{x}}\} & \textrm{otherwise}\\
\end{cases}
\end{equation}
where $B_{\frac{1}{\gamma}}(\hat{\vec{x}}):=\{\vec{x}\in\mathcal{X}:\textrm{dist}(\vec{x},\hat{\vec{x}})\leq \frac{1}{\gamma}\}$ is the $\frac{1}{\gamma}$-ball around $\hat{\vec{x}}$ under the distance measure $\textrm{dist}(\cdot,\cdot)$; when $f(\hat{\vec{x}})=\bar{r}^*(\hat{\vec{x}})=1$, we have 
\begin{multline}\label{eq:set-positive}
    N_{f,\bar{r}^*}(\hat{\vec{x}}) = \{\hat{\vec{x}}\}\cup\Big\{\vec{x}\in B_{\frac{1}{\gamma}}(\hat{\vec{x}}):
    f(\vec{x})\bar{r}^*(\vec{x}) = 0\textrm{ and }\\ \forall \vec{z}\in\mathcal{X},\;f(\vec{z})=\bar{r}^*(\vec{z})=1\implies \textrm{dist}(\vec{x},\hat{\vec{x}})\leq \textrm{dist}(\vec{z},\hat{\vec{x}})   \Big\}.
\end{multline}
The second set includes all features at which the agent would strategically manipulate to $\hat{\vec{x}}$, in best response to the classifier $f$ and abstention $\bar{r}^*$.

Using the notations above, the post-response distribution can be written as 
$
    q(\hat{\vec{x}}|\vec{x}) = \mathbf{1}_{\vec{x}\in N_{f,\bar{r}^*}(\hat{\vec{x}})}
$.
We next show that the values of $\bar{r}^*({\vec{x}})$ for negatively classified ${\vec{x}}$ does not influence the set $N_{f,\bar{r}^*}(\hat{\vec{x}})$ such that $f(\hat{\vec{x}})=1$, which further implies the term $B$ is invariant to abstention choices on negatively classified data points because $B$'s integrand can only depend on other feature vectors through $q(\hat{\vec{x}},y)$, which, according to Eq. \eqref{eq:post-join}, depends only on $q(\hat{\vec{x}}|\vec{x})$. 

Suppose ${\vec{x}},\hat{\vec{x}}\in\mathcal{X}$ such that $f(\vec{x})=0$ and $f(\hat{\vec{x}})=1$. If $\bar{r}^*(\hat{\vec{x}})=0$, it's directly follows from Eq. \eqref{eq:set-negative} that $N_{f,\bar{r}^*}(\hat{\vec{x}})$ is independent of $\bar{r}^*(\vec{x})$. If $\bar{r}^*(\hat{\vec{x}})=1$, changing the value of $\bar{r}^*(\vec{x})$ does not change the either condition in Eq. \eqref{eq:set-positive}, implying $N_{f,\bar{r}^*}(\hat{\vec{x}})$ is independent of $\bar{r}^*(\vec{x})$ as well. Thus, we conclude that $B$ is independent of $\bar{r}^*(\vec{x})$ for all negatively classified $\vec{x}$.

According to Eq. \eqref{eq:post-join} and \eqref{eq:set-negative}, the term $A$ can be decomposed into
\begin{multline}
    A=0+\int_{\hat{\vec{x},y}:\substack{f(\hat{\vec{x}})=0\\ N_{f,\bar{r}^*}(\hat{\vec{x}})=\{\hat{\vec{x}}\}}}\left[l(f(\hat{\vec{x}}),y)\bar{r}^*(\hat{\vec{x}}) + c(1-\bar{r}^*(\hat{\vec{x}}))\right]q(\hat{\vec{x}},y)\;dyd\hat{\vec{x}}
\end{multline}
where the first term is due to zero post-response density at those features. Besides, when $N_{f,\bar{r}^*}(\hat{\vec{x}})=\{\hat{\vec{x}}\}$, we obtain $q(\hat{\vec{x}},y)=p(\hat{\vec{x}},y)$ by Eq. \eqref{eq:post-join} and thus $A$ becomes
\begin{multline*}
    A=\int_{\hat{\vec{x}},y:\substack{f(\hat{\vec{x}})=0\\ N_{f,\bar{r}^*}(\hat{\vec{x}})=\{\hat{\vec{x}}\}}}\left[l(f(\hat{\vec{x}}),y)\bar{r}^*(\hat{\vec{x}}) + c(1-\bar{r}^*(\hat{\vec{x}}))\right]p(\hat{\vec{x}},y)\;dyd\hat{\vec{x}} \\
    = \int_{\hat{\vec{x}}:\substack{f(\hat{\vec{x}})=0\\ N_{f,\bar{r}^*}(\hat{\vec{x}})=\{\hat{\vec{x}}\}}}\Big[L_f(\hat{\vec{x}})\bar{r}^*(\hat{\vec{x}})+c(1-\bar{r}^*(\hat{\vec{x}}))\Big]p(\hat{\vec{x}})\;dyd\hat{\vec{x}}.
\end{multline*}
Recall that the unconstrained abstention function $r^*$ is a pointwise minimizer provided the conditional loss $L_f$. We must also have $r^*$ on $\hat{\vec{x}}$ such that $f(\hat{\vec{x}})=0$ minimizes $A$. As a consequence, $\tilde{r}^*$, the optimal constrained $\bar{r}^*$ with negatively labeled points interpolated by $r^*$, is also an optimal constrained solution.\qed

\subsection{Proof of Theorem \ref{theorem:no-larger-accept}}

Suppose, on the contrary, that $\bar{r}^*> r^*$, i.e., $r^*(\vec{x})=1\implies \bar{r}^*(\vec{x})=1$ and $\exists\vec{x}$ s.t. $r^*(\vec{x})=0$, and $\bar{r}^*(\vec{x})=1$. 
In light of Theorem \ref{theorem:negative-coincides}, it suffices to focus only on the case that $\bar{r}^*\neq r^*$ only on $\{f(\vec{x})=1\}$.

Let $\hat{\vec{x}}(\vec{x}|r^*)$ (resp. $\hat{\vec{x}}(\vec{x}|\bar{r}^*)$) denote the agent's best response under abstention $r^*$ (resp. $\bar{r}^*$). Denote $g_{r}(\vec{x}):=\Big[l(f({\vec{x}}),y)r(\vec{{x}})+c(1-r({\vec{x}}))\Big]p(\vec{x},y)$. More agents in $\{f(\vec{x})=0\}$ would manipulate under $\bar{r}^*$ since more predictions are accepted. Thus,
\begin{equation}\label{eq:diff-L-zero}
    \int_{f(\vec{x})=0}g_{\bar{r}^*}(\vec{x})\;d\vec{x}-\int_{f(\vec{x})=0}g_{{r}^*}(\vec{x})\;d\vec{x} = \int_{\substack{f(\vec{x})=0 \\ \hat{\vec{x}}(\vec{x}|r^*)=\vec{x} \\ \hat{\vec{x}}(\vec{x}|\bar{r}^*)\neq \vec{x}}}G(\vec{x}|r^*,\bar{r}^*)p(\vec{x})\;d\vec{x}
\end{equation}
where 
\begin{equation*}
    G(\vec{x}|r^*,\bar{r}^*) = \begin{cases}
        l(1,0)p(y=0|\vec{x}) - l(0,1)p(y=1|\vec{x}) & r^*(\vec{x})=\bar{r}^*(\vec{x})=1 \\
        l(1,0)p(y=0|\vec{x}) - c & r^*(\vec{x})=\bar{r}^*(\vec{x})=0
    \end{cases}.
\end{equation*}
Since $r^*$ is the unconstrained optimal abstention, $l(1,0)p(y=0|\vec{x})\geq c$, $\forall\vec{x}$ in the integration. For the other case, as the agent with $\vec{x}$ does not manipulate under $r^*$, it must hold $f(\hat{\vec{x}}(\vec{x}|\bar{r}^*))=1$, $r^*(\hat{\vec{x}}(\vec{x}|\bar{r}^*))=0$, and $\bar{r}^*(\hat{\vec{x}}(\vec{x}|\bar{r}^*))=1$. By the informative assumption of $f$, we have $l(1,0)p(y=0|\vec{x})>l(1,0)p(y=0|\hat{\vec{x}}(\vec{x}|\bar{r}^*))>c$ where the last inequality is due to the optimality of the unconstrained abstention $r^*$. Similarly, $r^*(\vec{x})=1$ implies $l(0,1)p(y=1|\vec{x})\leq c$. Therefore, we conclude $G(\vec{x}|r^*,\bar{r}^*)\geq 0$ for all $\vec{x}$ integrated. 

Given the above results, we assume $f(\vec{x})=1$, $\forall \vec{x}$ without loss of generality and show that $L(\bar{r}^*)\geq L(r^*)$. We first claim that if $\hat{\vec{x}}(\vec{x}|r^*)\neq\vec{x}$ and $\bar{r}^*(\vec{x})=0$, then $\hat{\vec{x}}(\vec{x}|\bar{r}^*)\neq \vec{x}$. In other words, manipulative agents stay manipulative unless they are newly accepted as positive under $\bar{r}^*$. This can be easily seen from their utility function Eq. \eqref{eq:agent-utility} that, due to $\bar{r}^*>r^*$, $\hat{\vec{x}}(\vec{x}|r^*)$ still strategically dominates no manipulation under $\bar{r}^*$.
Define $M(r):=\{\hat{\vec{x}}(\vec{x}|r)\neq \vec{x}\}$ as the set of features at which the agent would manipulate in best response to the abstention function $r$.

Consider the subset of $M(r^*)$ defined as $A:=M(r^*)\cap\{\bar{r}^*(\vec{x})=0\}$. By the claim established in the previous paragraph, we have $A\subseteq M(\bar{r}^*)$. It also follows that $M(r^*)\setminus A\subseteq \{\bar{r}^*(\vec{x})=1,\;r^*(\vec{x})=0\}\subseteq M(\bar{r}^*)^c$ because $\hat{\vec{x}}(\vec{x}|r^*)\neq\vec{x}$ means it must receive negative outcome from $r^*$ and $\bar{r}^*(\vec{x})=1\implies \hat{\vec{x}}(\vec{x}|\bar{r}^*)=\vec{x}$ (assuming $f(\vec{x})=1$). 
In addition, we have $M(\bar{r}^*)\setminus A\subseteq \{\bar{r}^*(\vec{x})=r^*(\vec{x})=0\}\cap M(r^*)^c$ because $\bar{r}^*$ cannot be positive if $\vec{x}$ manipulates. This suggests the partition of $\mathcal{X}$ into $\mathcal{X}=A\cup (M(\bar{r}^*)\setminus A) \cup (M({r}^*)\setminus A) \cup B$ where $B$ consists features at which agents do not manipulate under both $r^*$ and $\bar{r}^*$.
Then, we have
\begin{multline*}
    L(\bar{r}^*)-L(r^*) \overset{(a)}{\geq} \int_{A,y}(l(1,y) - l(1,y))p(\vec{x},y)\;dyd\vec{x}\\
    + \int_{M({r}^*)\setminus A,y}(l(1,y)-l(1,y))p(\vec{x},y)\;dyd\vec{x} + \int_{M(\bar{r}^*)\setminus A,y} (l(1,y)-c)p(\vec{x},y) \;dyd\vec{x}\\ +  \int_{B\cap \{r^*(\vec{x})\neq \bar{r}^*(\vec{x})\},y}(g_{\bar{r}^*}(\vec{x})-g_{r^*}(\vec{x}))\;dyd\vec{x}
    + \int_{B\cap \{r^*(\vec{x})= \bar{r}^*(\vec{x})\},y}(g_{\bar{r}^*}(\vec{x})-g_{r^*}(\vec{x}))\;dyd\vec{x} \\
    = \int_{M(\bar{r}^*)\setminus A,y} (l(1,y)-c)p(\vec{x},y) \;dyd\vec{x} + \int_{B\cap \{r^*(\vec{x})\neq \bar{r}^*(\vec{x})\},y}(g_{\bar{r}^*}(\vec{x})-g_{r^*}(\vec{x}))\;dyd\vec{x}
\end{multline*}

Here, in (a), the 1st term is because the agents in $A$ manipulate under both $r^*$ and $\bar{r}^*$; the 2nd term is because the agents still get positive outcomes for both cases, by manipulating under $r^*$ while truthfully reporting under $\bar{r}^*$.
Since $r^*$ is the optimal abstention function for the unconstrained case, both remaining terms are non-negative, implying $r^*$ weakly dominates $\bar{r}^*$ for the principal's objective, which is a contradiction.

\vspace{-3pt}
\section{Supplementaries of Section \ref{sec: uniform_example}}

\vspace{-3pt}
\subsection{Proof of Proposition \ref{prop: agent_best_response_case_study}: Agent best response in the uniform case study.}

\begin{proof}
The agent's utility is defined by $U(\hat{x}) = \mathbf{1}_{\hat{x} \geq 0} \cdot \mathbf{1}_{|\hat{x}| \geq T} - \gamma(\hat{x}-x)^2$ . If the agent does not manipulate, their utility is $U_{\text{no manip.}}(\hat{x}) = \mathbf{1}_{\hat{x} \geq 0} \cdot \mathbf{1}_{|\hat{x}| \geq T}$. If the agent manipulates, they will reach the target value $\hat{x} = T$, and get classified as $1$, which yields utility $U(\hat{x}) = \mathbf{1}_{\hat{x} \geq 0} \cdot \mathbf{1}_{|\hat{x}| \geq T} - \gamma(\hat{x}-x)^2 = 1 - \gamma(T-x)^2$. The agent will report $\hat{x} = T$ if $1-\gamma(T-x)^2 > U_{\text{no manip.}}(x)$, which simplifies to when $|T-x| < \sqrt{\frac{1-U_{\text{no manip.}}(x) }{\gamma}}$.

The first case to consider is when $x \in [T, 2]$. In this case the agent's best response is to stay as they are and not manipulate. The second case is when $x \in [-2, T]$. In this region, $U_{\text{no manip.}}(x) <0$ either due to $x<0$ or $|x| < T$. The manipulation condition is $1-\gamma(T-x)^2 > 0$. Let $K = \sqrt{\frac{1}{\gamma}}$. Then we can rewrite the manipulation condition as $|T-x| < K$. This inequality holds for $x \in (T-K, T+K)$. Combining that with the region $x \in [-2, T]$, we get that the agent will manipulate if $x \in (\max(-2, T-K), T)$, and will not manipulate if $x \in [-2, \max(-2, T-K)] $ and $x <T$. Therefore we have derived the full best response for the agent.
\end{proof}

\vspace{-3pt}
\subsection{Proof of Proposition \ref{prop:dm_br}: Principal's optimal threshold $\bar{T}^*$ and corresponding loss in the uniform case study.}

\begin{proof}

We write the decision maker's loss as $L(T) = \mathbb{E}_x[l(x, \hat{x}, T)] = \int_{-2}^{2} l(x, \hat{x}, T) \cdot f(x) dx = \frac{1}{4} \int_{-2}^{2} l(x, \hat{x}, T) dx$ where $l(x, \hat{x}, T) = c \cdot \mathbf{1}_{\{|\hat{x}|<T\}} + ( \mathbf{1}_{\{|x|>T\}} \cdot [ \mathbf{1}_{\{x<0\}} \cdot \mathbf{1}_{\{\hat{x}>0\}} + \mathbf{1}_{\{x>0\}} \cdot \mathbf{1}_{\{\hat{x}<0\}} ] )$.

For any fixed value of $T$, the entire domain of $x$ (from -2 to 2) is partitioned into a finite number of intervals by the points $T$, $\max(-2, T-K)$, 0, and $-T$. Within each of these sub-intervals, $l(x, \hat{x}, T)$ is piecewise constant.

\[ l(x, T) = \begin{cases}
0 & \text{if } x \in [-2, -T] \text{ or } x \in [0, 2] \\
c & \text{if } x \in (-T, \max(-2, T - K)] \\
1 & \text{if } x \in (\max(-2, T - K), 0) \\
\end{cases} \]

The decision maker's objective is to minimize expected loss $L(T)$, which can be written as:
\begin{align*}
L(x, T) &= \int_{-T}^{\max(-2, T-K)} c \,dx + \int_{\max(-2, T-K)}^{0} 1 \,dx
\end{align*}

The critical part of this integral is the $\max(-2, T-K)$. For $T \in [0,2]$, $T-K$ ranges from $-K$ to $2-K$. Since $K > 0$, $2-K < 2$. The form of $L(T)$ changes depending on the relationship between $T$ and $K$. The critical values of $T$ that define these changes are $T = 0$, $T= \frac{K}{2}$ (when $-T = T-K$), $T = K$ (where $T-K = 0$), and $T=2$. We will analyze $L(T)$ and its minimum $\bar{T}^*$ based on the different orders of these critical points, which depend on $K$.

\vspace{-7pt}
\subsubsection{When $0 < K<2$:}
In this case the critical points are ordered $0< \frac{K}{2} < K < 2$. For $0 \leq T \leq \frac{K}{2}$, $T-K \leq -T$, so $\max(-2, T-K) = T-K$. Thus, note that the interval $(-T, T-K)$ is empty, so we are left with $L(T) = \frac{1}{4} \int_{T-K}^0 1 \, dx = \frac{1}{4} (K-T)$. Thus, we see the slope of this region is $m_1 = -\frac{1}{4}$, which indicates $L(T)$ is decreasing. For $\frac{K}{2} < T < K$, we have $\max(-2, T-K) = T-K$, but now $T-K > -T$, so in $L(T)$, both the $c$ term and $1$ term contribute. $L(T) = \int_{-T}^{T-K} c \,dx + \int_{T-K}^{0} 1 \,dx = \frac{1}{4}[T(2c-1)+K(1-c)]$. Here the slope is $m_2 = \frac{2c-1}{4}$. For $K\leq T \leq 2$, the interval $\max(-2, T-K), 0)$ becomes empty, so we are left with $L(T) = \frac{1}{4} \int_{-T}^{T-K} c \, dx = \frac{1}{4} c(2T-K)$. Here the slope is $m_3 = \frac{c}{2}$.

By considering the ranges of $c$, we can identify the decision maker's best response for $0<K<2$. If $c<0.5$, $m_2 < 0$. This indicates that $L(T)$ is first decreasing from $m_1$, then decreasing for $m_2$, then increasing for $m_3$. Thus, the minimum is at the transition from decreasing to increasing, $\bar{T}^* = K$. If c = 0.5, then $L(T)$ first decreases ($m_1$), then becomes constant ($m_2 = 0)$, then increases ($m_3$). The minimum occurs over the flat region $\bar{T}^* = T \in [\frac{K}{2}, K]$. If $c>0.5$, then $L(T)$ first decreases ($m_1$), then becomes increasing ($m_2 > 0 )$, then continues to increase ($m_3$). The minimum occurs at $\bar{T}^* = \frac{K}{2}$.

\vspace{-7pt}
\subsubsection{When $2 \leq K \leq 4$:}
Here, $1 < \frac{K}{2} \leq 2 < K$. Although $K$ lies outside the domain $T \in [0, 2]$, $\frac{K}{2}$ remains within it. For $0 \leq T \leq \frac{K}{2}$, the loss is $L(T) = \frac{1}{4}(K - T)$ with slope $m_1 = -\frac{1}{4}$. For $\frac{K}{2} \leq T \leq 2$, the loss becomes $L(T) = \frac{1}{4}[T(2c - 1) + K(1 - c)]$ with slope $m_2 = \frac{2c - 1}{4}$.

If $c < 0.5$, then $m_2 < 0$ and $L(T)$ decreases throughout; the minimum occurs at $\bar{T}^* = 2$.
If $c = 0.5$, then $m_2 = 0$, so $L(T)$ is constant for $T \in [\frac{K}{2}, 2]$; the minimum is any $\bar{T}^* \in [\frac{K}{2}, 2]$.
If $c > 0.5$, then $m_2 > 0$, so $L(T)$ reaches its minimum at the critical point $\bar{T}^* = \frac{K}{2}$.

\vspace{-7pt}
\subsubsection{When $K>4$:}

In this case, for any $T \in [0,2]$, $T-K < T-4 < -2$. So $\max(-2, T-K) = -2$. Also note that the interval $(-T, -2]$ is empty, so the only integral considered in $L(T)$ is $L(T) = \frac{1}{4}\int_{-2}^0 1 \, dx = \frac{1}{2}$. This indicates that $L(T)$ is constant over $T \in [0, 2]$. Thus in this case, $\bar{T}^* = T \in [0,2]$.

\vspace{-5pt}
\subsubsection{Combining all cases:}
Considering all cases of $K$ and $c$, we compile the following expression for the decision maker's best response:

$$
\bar{T}^* = \begin{cases}
K & \text{if } c < 0.5, 0 < K < 2 \\
2 & \text{if } c < 0.5, 2 \leq K \leq 4 \\
\left[\frac{K}{2}, K\right] & \text{if } c = 0.5, 0 < K < 2 \\
\left[\frac{K}{2}, 2\right] & \text{if } c = 0.5, 2 \leq K \leq 4 \\
\frac{K}{2} & \text{if } c > 0.5, 0 < K \leq 4 \\
[0, 2] & \text{if } K > 4
\end{cases}
$$

Then, it is clear that plugging $\bar{T}^*$ into $L(T)$ yields Equation \ref{eq:dm_br_L}.
\end{proof}

\subsection{Proof of Proposition \ref{prop:no_abs_loss}: Loss with no abstention for the uniform case study}
By considering the agent's best response, we see that:

$$
l(x, \hat{x}(x)) = \begin{cases}
1 & \text{if } -1/\sqrt{\gamma} \leq x < 0 \\
0 & \text{otherwise}
\end{cases}
$$

The expected loss $L$ is found by integrating $l(x, \hat{x}(x)) \cdot f(x)$ over $x$'s domain.
$$L = \int_{-2}^{2} l(x, \hat{x}(x)) \cdot \frac{1}{4} dx$$
Since $l(x, \hat{x}(x)) = 1$ only when $x \in [-K, 0)$, this integral effectively becomes:
$$L = \int_{\max(-2, -K)}^{0} 1 \cdot \frac{1}{4} dx$$
This leads to two distinct cases for the expected loss $L$.
If $K \le 2$, the interval $[-K, 0)$ is entirely contained within our sample space of $x$. 

$$L = \frac{1}{4} \int_{-K}^{0} dx = \frac{1}{4} [x]_{-K}^{0} = \frac{1}{4} (0 - (-K)) = \frac{K}{4}$$
If $K > 2$, and $x$ is defined for $x \in [-2, 2]$, the lower bound for the integral is $-2$. 

$$L = \frac{1}{4} \int_{-2}^{0} dx = \frac{1}{4} [x]_{-2}^{0} = \frac{1}{4} (0 - (-2)) = \frac{2}{4} = \frac{1}{2}$$

\subsection{On expected manipulation for unqualified agents}
\subsubsection{Without abstention:} 

We first compute expected manipulation without abstention. The agent’s best response is $D_{\text{no\_abstention}}(x) = -x$ for $x \in [-K, 0)$ and 0 otherwise. Thus,
\[
E_{\text{no\_abstention}} = \frac{1}{4} \int_{\max(-2, -K)}^0 (-x) \, dx.
\]

\paragraph{Case 1: $0 < K \leq 2$} 
Here, $-K \ge -2$, so $\max(-2, -K) = -K$:
\[
E_{\text{no\_abstention}} = \frac{1}{4} \int_{-K}^{0} (-x) \, dx = \frac{1}{4} \left[ -\frac{x^2}{2} \right]_{-K}^{0} = \frac{K^2}{8}.
\]

\paragraph{Case 2: $K > 2$}
Now $-K < -2$, so $\max(-2, -K) = -2$:
\[
E_{\text{no\_abstention}} = \frac{1}{4} \int_{-2}^{0} (-x) \, dx = \frac{1}{2}.
\]

\subsubsection{With abstention:}

We now compute the expected manipulation from unqualified agents under abstention, by integrating $D_{\text{with\_abstention}}(x) = \bar{T}^* - x$ over $x \in [-2, 0)$, when $\max(-2, \bar{T}^* - K) < x < \bar{T}^*$:
\[
E_{\text{with\_abstention}} = \frac{1}{4}\int_{\max(-2, \bar{T}^*-K)}^{0} (\bar{T}^*-x) \, dx.
\]

\paragraph{Case 1: $0 < K \leq 2$, $c < 0.5$}
Here, $\bar{T}^* = K$, so the integration bounds are $[0, 0]$:
\[
E_{\text{with\_abstention}} = \frac{1}{4}\int_0^0 (K - x)\, dx = 0.
\]

\paragraph{Case 2: $0 < K \leq 2$, $c \ge 0.5$}
Now $\bar{T}^* = \frac{K}{2}$, and bounds are $[-\frac{K}{2}, 0]$:
\[
E_{\text{with\_abstention}} = \frac{1}{4} \int_{-\frac{K}{2}}^{0} \left(\frac{K}{2}-x\right) dx = \frac{3K^2}{32}.
\]

\paragraph{Case 3: $2 \leq K \leq 4$, $c < 0.5$}
Here, $\bar{T}^* = 2$, and bounds are $[2-K, 0]$:
\[
E_{\text{with\_abstention}} = \frac{1}{4} \int_{2-K}^{0} (2 - x)\, dx = \frac{K^2}{8} - \frac{1}{2}.
\]

\paragraph{Case 4: $2 \leq K \leq 4$, $c \ge 0.5$}
Here, $\bar{T}^* = \frac{K}{2}$ and bounds again are $[-\frac{K}{2}, 0]$:
\[
E_{\text{with\_abstention}} = \frac{3K^2}{32} \quad \text{(same as Case 2)}.
\]

\paragraph{Case 5: $K > 4$}
Since $\bar{T}^* - K < -2$, the bounds are $[-2, 0]$, and
\[
E_{\text{with\_abstention}} = \frac{1}{4} \int_{-2}^{0} (\bar{T}^* - x)\, dx = \frac{\bar{T}^*}{2} + \frac{1}{2}.
\]

\subsubsection{Comparing $E_{\text{no\_abstention}}$ and $E_{\text{with\_abstention}}$:}

The first region is $0 < K \le 2$ (i.e., $\gamma \ge 0.25$), where $E_{\text{no\_abstention}} = \frac{K^2}{8}$. In the abstention case, if $c < 0.5$, then $E_{\text{with\_abstention}} = 0$, since the optimal threshold $\bar{T}^* = K$ is too high for unqualified agents to manipulate to. If $c \ge 0.5$, then $E_{\text{with\_abstention}} = \frac{3K^2}{32}$, which is strictly less than $\frac{K^2}{8}$ for $K > 0$. Thus, abstention always reduces expected manipulation in this region.

In the second region, $2 < K \le 4$ ($0.0625 \le \gamma < 0.25$), we have $E_{\text{no\_abstention}} = \frac{1}{2}$. For $c < 0.5$, $E_{\text{with\_abstention}} = \frac{K^2}{8} - \frac{1}{2}$, which is lower than $\frac{1}{2}$ when $K \le \sqrt{8}$, higher when $K > \sqrt{8}$, and equal when $K = \sqrt{8}$. For $c \ge 0.5$, $E_{\text{with\_abstention}} = \frac{3K^2}{32}$, which is less than $\frac{1}{2}$ when $K \le \sqrt{16/3}$, greater when $K > \sqrt{16/3}$, and equal when $K = \sqrt{16/3}$.

In the third region, $K > 4$ (i.e., $\gamma < 0.0625$), $E_{\text{no\_abstention}} = \frac{1}{2}$. With abstention, $E_{\text{with\_abstention}} = \frac{\bar{T}^*}{2} + \frac{1}{2}$ for $\bar{T}^* \in [0, 2]$. If $\bar{T}^* = 0$, expected manipulation matches the no-abstention case; if $\bar{T}^* > 0$, it is strictly higher. Thus, in this region, abstention yields equal or greater expected manipulation, depending on the principal's threshold choice. This occurs because agents can manipulate to any point in $[-2, 2]$ when $K > 4$. In practice, such low $\gamma$ is rare, limiting the relevance of this case.
\end{document}